\theoremstyle{plain}  
\newtheorem{theorem}{Theorem}[section] 
\newtheorem{lemma}[theorem]{Lemma} 
\newtheorem{proposition}[theorem]{Proposition} 
\newtheorem{corollary}[theorem]{Corollary}
\theoremstyle{definition} 
\newtheorem{definition}{Definition}[section] 
\newtheorem{example}{Example}[section]
\theoremstyle{remark}
\newcommand{\diff}{\,\mathrm{d}}
\newcommand{\R}{\mathbb{R}}
\newcommand{\supp}{\operatorname{supp}}
\newcommand{\atticY}{\mathcal{Y}}
\begin{document}

\title{Characteristic kernels on Hilbert spaces, Banach spaces, and on sets of measures}
\author{Johanna Ziegel, David Ginsbourger, and Lutz D\"umbgen\\[1ex]University of Bern\\[1ex]\url{{johanna.ziegel,david.ginsbourger,lutz.duembgen}@stat.unibe.ch}}

\maketitle

\begin{abstract}
We present new classes of positive definite kernels on non-standard spaces that are integrally strictly positive definite or characteristic. In particular, we discuss radial kernels on separable Hilbert spaces, and introduce broad classes of kernels on Banach spaces and on metric spaces of strong negative type. The general results are used to give explicit classes of kernels on separable $L^p$ spaces and on sets of measures.
\end{abstract}

\section{Introduction}

Symmetric, positive definite functions, so-called \emph{kernels}, are of great importance in many different areas. 
In spatial statistics, kernels are omnipresent as they appear as covariance functions of second order random fields \citep{Stein1999}. 
In machine learning, the ``kernel trick'' allows to generalize various methods from classical multivariate statistics pertaining to classification, dimensionality reduction, and regression by working in a reproducing kernel Hilbert space (RKHS) instead of the original input domain; see \citet{HofmannScholkopfETAL2008,SteinwartChristmann2008} and references therein. 
A related family of kernel methods relies on embeddings of probability measures into an RKHS, so-called kernel mean embeddings (KMEs): see \citet{BerlinetThomas-Agnan2004,SriperumbudurGrettonETAL2010}. Considering the RKHS-norm between the embedding of two measures yields a pseudometric on probability measures which is known as the maximum mean discrepancy (MMD). Its applications are manifold ranging from kernel two-sample tests \citep{GrettonBorgwardtETAL2012} over independence test \citep{GrettonBousquetETAL2005} to learning on distributional data \citep{Sutherland2016, SzaboSriperumbudurETAL2016}.
In statistical forecasting, kernel scores are important proper scoring rules that allow to compare the skill of competing probabilistic predictions \citep{Dawid2007,GneitingRaftery2007} with the most prominent example being the continuous ranked probability score \citep{MathesonWinkler1976}. The aim of this paper is to provide new classes of kernels on non-standard input spaces that are characteristic or even integrally strictly positive definite.

Let $(X,\mathcal{A})$ be a measurable space and $k$ a measurable kernel on $X$. The kernel score $S_k$ associated with $k$ is the map
\begin{equation}\label{eq:kernelscore}
S_k(P,x) = -\int_X k(\omega,x)\diff P(\omega) + \frac{1}{2}\int_X\int_X k(\omega,\omega') \diff P(\omega) \diff P(\omega') + \frac{1}{2}k(x,x), 
\end{equation}
for $P \in \mathcal{M}_1^k(X)$, $x \in X$, where $\mathcal{M}_1^k(X)$ is a suitable set of probability measures on $X$ such that all integrals are finite; see Section \ref{sec:prelim}. In contrast to \citet{Dawid2007}, the last term in \eqref{eq:kernelscore} is missing in the definition of kernel scores by \citet{GneitingRaftery2007}. We include it, since it ensures that the kernel score $S_k$ is non-negative.
By \citet[Theorem 1.1]{SteinwartZiegel2021}, $S_k$ is a proper scoring rule with respect to $\mathcal{M}_1^k(X)$, that is, for all $P,Q \in \mathcal{M}_1^k(X)$, $S_k(P,P) \le S_k(Q,P)$, where $S_k(Q,P) = \int_X S_k(Q,x)\diff P(x)$. The MMD $\gamma_k$ associated to $k$ is given by
\begin{equation}\label{eq:MMD}
\gamma_k(P,Q) = \sqrt{2(S_k(Q,P) - S_k(P,P))}, \quad P,Q \in \mathcal{M}_1^k(X),
\end{equation}
see \citet[Lemma 6]{GrettonBorgwardtETAL2012}. Each proper scoring rule $S$ gives rise to a divergence defined by $d(P,Q) = S(Q,P) - S(P,P)$ \citep{GneitingRaftery2007}. Equation \eqref{eq:MMD} shows that the divergences of kernel scores are squared MMDs, and vice versa.

Some MMDs have also been studied in statistics under the name of energy distances \citep{SzekelyRizzo2004,SzekelyRizzo2005,BaringhausFranz2004}, and they have been applied to various multivariate testing problems. Building on energy distances, \citet{SzekelyRizzoETAL2007,SzekelyRizzo2009} have proposed distance covariances, which have been studied further in probability theory \citep{Lyons2013} and statistics \citep{MatsuiMikoschETAL2017,BetkenDehling2021}. It should be noted that until the work of \citet{SejdinovicSriperumbudurETAL2013}, it was not clear that energy distances are in fact a special case of MMDs, see also the discussion before Theorem \ref{thm:Laplacekernel}.

Generally, MMDs are pseudometrics on probability measures, that is, they are symmetric and non-negative and they satisfy the triangle inequality; see Section \ref{sec:prelim} where it is explained how MMDs are even seminorms on some suitable vector space of signed measures. It may happen, however, that the MMD between two different probability measures is zero. Kernels for which the MMD is a metric (or a norm) are called \emph{characteristic}. A closely related notion is that of \emph{universality} of a kernel; see Section \ref{sec:prelim} for details. 
 
Characteristic and universal kernels are of particular interest in applications for various reasons: A characteristic kernel gives rise to a \emph{strictly} proper kernel score, that is, $S_k(P,P) = S_k(Q,P)$ implies $P=Q$, which encourages truthful reports of forecasters \citep{GneitingRaftery2007}. Equivalently, it yields MMDs that are metrics and not only pseudometrics, which is important for example for their application in two-sample tests. Universality of kernels is of fundamental importance for the statistical analysis of learning methods such as support vector machines. It allows to establish asymptotic consistency results \citep[Chapter 6 and 7]{SteinwartChristmann2008}.

Constructing positive definite kernels on arbitrary sets is a non-trivial task. However, if it can be accomplished, it opens the door to a plethora of methods including the ones mentioned above. By now, the literature is fairly rich concerning positive and strictly positive definite kernels on non-standard input spaces.
If these spaces are locally compact Hausdorff spaces, even characteristic and universal kernels are well-understood \citep{Steinwart2001,SriperumbudurGrettonETAL2010,Simon-GabrielScholkopf2018}. Of particular interest to this paper are the results of \citet{ChristmannSteinwart2010}. They provide universal kernels on general compact metric spaces. In particular, they provide kernels on the space of probability measures over a compact metric space. These kernels have found application in kernel-ridge regression with distribution-valued covariates \citep{SzaboSriperumbudurETAL2016}, see also \citet{MuandetFukumizuETAL2012}. 

If the input space is not locally compact, then the question of kernels being characteristic or universal is wide open. 
Just recently, \citet[Theorem 4]{HayatiFukumizuETAL2020} have shown the existence of a continuous characteristic kernel on an infinite dimensional separable Hilbert space. \citet{WynneDuncan2020} define a class called Squared Exponential $T$ (SE-T) kernels on a separable Hilbert space, and establish that for a nuclear, self-adjoint, and non-negative operator $T$, the resulting SE-T kernel is characteristic if only if $T$ is injective \citep[Theorem 3]{WynneDuncan2020}. Building on this result, they can also construct characteristic kernels on some Polish spaces.


The results that we present in this paper go further in several respects. On a separable Hilbert space $H$, we show in Section \ref{sec:Hilbert} that the Gaussian kernel is integrally strictly positive definite with respect to all finite signed measures on $H$, and extend this result to the whole class $\Phi_\infty^+$ of strictly positive definite radial kernels on $H$. In Section \ref{sec:Banach}, we introduce constructions of integrally strictly positive definite kernels on Banach spaces and on metric spaces of strong negative type. The first one builds on characteristic functions of Gaussian measures. The second one arises from chaining functions of $\Phi_\infty^+$ with the metric. Specific results for the case of separable $L^p$ spaces are presented in Section \ref{sec:Lp}, yielding two different classes of integrally strictly positive definite kernels using the first construction for $1<p<\infty$,  and the second construction for $1<p\leq 2$. Finally, in Section \ref{sec:measures}, we present results on integrally strictly positive definite kernels on sets of measures over locally compact Hausdorff spaces equipped with the topology of weak convergence. The considered classes of kernels rely on KMEs into RKHSs as well as into an $L^2$ space. In Section \ref{sec:prelim}, we introduce the necessary background and notation. While this section is quite technical, we hope that it makes the overall exposition close to self-contained. The paper concludes with a short discussion in Section \ref{sec:discussion}.

\section{Preliminaries}\label{sec:prelim}

\subsection{Kernel embeddings}
We follow the notation used in \citet{SteinwartZiegel2021}.
For a measurable space $(X,\mathcal{A})$, we denote by $\mathcal{M}(X)$ the space of all finite signed measures on $X$ and equip it with the total variation norm $\|\cdot\|_{TV}$. The set of probability measures is denoted by $\mathcal{M}_1(X)$, and the set of finite non-negative measures by $\mathcal{M}_+(X)$. We define $\mathcal{M}_0(X):=\{\mu \in \mathcal{M}\;|\; \mu(X) = 0\}$. For $\mu \in \mathcal{M}(X)$, we denote by $\mathcal{L}_1(\mu)$ the space of all measurable functions $f:X\to\mathbb{R}$ that are $\mu$-integrable. Let us recall the notion of Pettis integrals. Let $H$ be a Hilbert space and $f:X \to H$ a map. Then, $f$ is weakly measurable if $\langle w,f\rangle_H:X \to \mathbb{R}$ is measurable for all $w \in H$, and $f$ is weakly integrable with respect to $\mu$ if $\langle w,f\rangle_H \in \mathcal{L}_1(\mu)$ for all $w \in H$. If $f$ is weakly integrable, then there exists a unique $i_\mu(f) \in H$ such that for all $w \in H$, we have
\begin{equation}\label{eq:Pettis}
\langle w, i_\mu(f)\rangle_H = \int_X \langle w,f\rangle_H\diff \mu.
\end{equation}
The element $i_\mu(f)$ is called the Pettis integral of $f$ with respect to $\mu$ and we use the more intuitive notation $\int_X f \diff \mu := i_\mu(f)$ \citep[Chapter II.3]{DiestelUhl1977}.

Let $H$ be a Hilbert space of real-valued functions on $X$. A function $k:X\times X\to \mathbb{R}$ is a reproducing kernel of $H$ if $k(\cdot,x)\in H$ for all $x \in X$, and $\langle f, k(\cdot,x)\rangle_H = f(x)$ for all $f \in H$, $x \in X$. The Moore-Aronszajn theorem states that for every kernel $k$, that is, for every symmetric, positive definite function $k:X\times X \to \mathbb{R}$, there is an associated RKHS $H_k$ of real-valued functions on $X$ with reproducing kernel $k$ \citep[Chapter 4]{SteinwartChristmann2008}. The canonical feature map is $\Phi_k:X \to H_k, x \mapsto k(\cdot,x)$. The RKHS $H_k$ consists of measurable functions on $X$ if $k$ is measurable \citep[Lemma 4.24]{SteinwartChristmann2008}. Therefore, since $\langle f,\Phi_k\rangle_{H_k} = f$ for all $f \in H_k$, $\Phi_k$ is weakly measurable if $k$ is measurable. The canonical feature map $\Phi_k$ is weakly integrable with respect to $\mu \in \mathcal{M}(X)$ if and only if $f \in \mathcal{L}^1(\mu)$ for all $f \in H_k$. By \citet[Theorem 4.26]{SteinwartChristmann2008}, a sufficient condition for this is
\[
\int_X \sqrt{k(x,x)} \diff |\mu|(x) < \infty. 
\]
We define 
\[
\mathcal{M}^k(X) := \big\{\mu \in \mathcal{M}(X)\;|\; H_k \subseteq \mathcal{L}_1(\mu)\big\},
\]
and $\mathcal{M}_1^k(X) = \mathcal{M}^k(X) \cap \mathcal{M}_1(X)$. If $k$ is bounded, then $\mathcal{M}^k(X) = \mathcal{M}(X)$. It is also necessary for $\mathcal{M}^k(X) = \mathcal{M}(X)$ that $k$ is bounded, which can be seen by combining \citet[Proposition 2]{SriperumbudurGrettonETAL2010} and \citet[Lemma 4.23]{SteinwartChristmann2008} as remarked by \citet{SteinwartZiegel2021}. The vector space $\mathcal{M}^k(X)$ is the largest set on which the kernel mean embedding 
\[
\Phi_k(\mu) := \int_X \Phi_k \diff \mu = \int k(\cdot,x)\diff \mu(x)
\]
can be defined. The kernel mean embedding $\Phi_k\colon\mathcal{M}^k(X) \to H_k$ is linear, so 
\begin{equation}\label{eq:Hnorm}
\|\mu\|_{H_k} := \|\Phi_k(\mu)\|_{H_k}
\end{equation}
defines a (new) semi-norm on $\mathcal{M}^k(X)$. This semi-norm is a norm, if and only if the kernel mean embedding $\Phi_k$ is injective. If the kernel mean embedding $\Phi_k$ restricted to $\mathcal{M}_1^k(X)$ is injective, then the kernel $k$ is called \emph{characteristic}. Using \eqref{eq:Pettis}, we see that for $\mu_1,\mu_2 \in \mathcal{M}^k(X)$, it holds that
\[
\langle \Phi_k(\mu_1),\Phi_k(\mu_2)\rangle_{H_k} = \int_X\int_X k(x,x') \diff \mu_1(x) \diff \mu_2(x).
\]
Therefore, $\Phi_k$ is injective if and only if
\begin{equation}\label{eq:intspd}
\|\mu\|_{H_k}^2 = \int_X\int_X k(x,x')\diff \mu(x)\diff \mu(x') > 0
\end{equation}
for all $\mu \in \mathcal{M}^k(X)\backslash \{0\}$.
\begin{definition}
Let $k$ be a measurable kernel on $X$ and $\mathcal{M} \subseteq \mathcal{M}^k(X)$. Then, $k$ is called \emph{strictly integrally positive definite} with respect to $\mathcal{M}$ if $\|\mu\|_{H_k}^2 > 0$ for all $\mu \in \mathcal{M}\backslash \{0\}$. 
\end{definition}

\citet[Proposition 2.2]{SteinwartZiegel2021} show that a bounded measurable kernel $k$ on $X$ is characteristic if and only if it is strictly integrally positive definite with respect to $\mathcal{M}_0(X)$; see also \citet[Lemma 8]{SriperumbudurGrettonETAL2010}. The following lemma, generalizes \citet[Lemma 3.3]{SteinwartZiegel2021} to arbitrary convex mixtures of kernels which will be useful in subsequent arguments.

\begin{lemma}\label{lem:mixlemma}
Let $(X,\mathcal{A})$ be a measurable space, $\nu$ a finite measure on $\mathbb{R}$ with $\nu\not=0$, and $k_\theta$, $\theta \in \mathbb{R}$ a family of uniformly bounded kernels on $X$ with RKHSs $H_\theta$, $\theta \in \mathbb{R}$. We assume that $(x,y,\theta)\mapsto k_\theta(x,y)$ is $\mathcal{A}\otimes\mathcal{A}\otimes\mathcal{B}(\mathbb{R})$-measurable. Let $H$ be the RKHS of the kernel $k = \int_\mathbb{R} k_\theta \diff \nu(\theta)$. Then, for all $\mu \in \mathcal{M}(X)$, we have
\[
\|\mu\|_H^2 = \int \|\mu\|_{H_\theta}^2 \diff \nu(\theta). 
\]
If there is a Borel set $A \subseteq \mathbb{R}$ with $\nu(A) > 0$ such that $k_\theta$ is strictly integrally positive definite with respect to $\mathcal{M} \subseteq\mathcal{M}(X)$ for all $\theta \in A$, then $k$ is strictly integrally positive definite with respect to $\mathcal{M}$. In particular, if all $k_\theta$ have injective kernel mean embeddings, then so does $k$. If all $k_\theta$ are characteristic, then so is $k$.
\end{lemma}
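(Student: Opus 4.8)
The plan is to reduce the claimed identity to the double–integral formula \eqref{eq:intspd} followed by an application of Fubini's theorem, after which the three consequences come out by specialising the choice of the set $\mathcal{M}$. Before anything else I would record the preliminary observations that make the statement meaningful. Writing $M := \sup_{\theta,x,y}|k_\theta(x,y)| < \infty$ for the uniform bound, the joint measurability hypothesis guarantees that $\theta\mapsto k_\theta(x,y)$ is bounded and measurable for each fixed $(x,y)$, so $k(x,y) = \int_\mathbb{R} k_\theta(x,y)\diff\nu(\theta)$ is well defined; it is $\mathcal{A}\otimes\mathcal{A}$-measurable because a partial integral of a jointly measurable function is measurable, it is symmetric since each $k_\theta$ is, and it is positive definite because $\sum_{i,j}c_ic_jk(x_i,x_j) = \int_\mathbb{R}\sum_{i,j}c_ic_jk_\theta(x_i,x_j)\diff\nu(\theta) \ge 0$. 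Hence $H$ and the seminorm $\|\cdot\|_H$ are well defined, and since $|k|\le M\,\nu(\mathbb{R})$ is bounded we get $\mathcal{M}^k(X) = \mathcal{M}(X)$; likewise $\mathcal{M}^{k_\theta}(X) = \mathcal{M}(X)$ for every $\theta$.

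Next, for a fixed $\mu\in\mathcal{M}(X)$, formula \eqref{eq:intspd} applies both to $k$ and to each $k_\theta$ (all bounded, so $\mu$ lies in the relevant $\mathcal{M}^\bullet(X)$), giving
\[
\|\mu\|_H^2 = \int_X\!\int_X k(x,x')\diff\mu(x)\diff\mu(x') = \int_X\!\int_X\int_\mathbb{R} k_\theta(x,x')\diff\nu(\theta)\diff\mu(x)\diff\mu(x').
\]
Because $(x,x',\theta)\mapsto k_\theta(x,x')$ is $\mathcal{A}\otimes\mathcal{A}\otimes\mathcal{B}(\mathbb{R})$-measurable and bounded by $M$, while the product measure $|\mu|\otimes|\mu|\otimes\nu$ is finite, Fubini's theorem permits interchanging the integrations, yielding $\|\mu\|_H^2 = \int_\mathbb{R}\big(\int_X\int_X k_\theta(x,x')\diff\mu(x)\diff\mu(x')\big)\diff\nu(\theta) = \int_\mathbb{R}\|\mu\|_{H_\theta}^2\diff\nu(\theta)$, once more by \eqref{eq:intspd}. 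The same Fubini argument shows $\theta\mapsto\|\mu\|_{H_\theta}^2$ is $\nu$-measurable, and it is non-negative as a squared seminorm.

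For the consequences, suppose $A\subseteq\mathbb{R}$ is Borel with $\nu(A) > 0$ and each $k_\theta$, $\theta\in A$, is strictly integrally positive definite with respect to $\mathcal{M}$. Given $\mu\in\mathcal{M}\setminus\{0\}$ we have $\|\mu\|_{H_\theta}^2 > 0$ for all $\theta\in A$, so from the identity and non-negativity of the integrand, $\|\mu\|_H^2 \ge \int_A\|\mu\|_{H_\theta}^2\diff\nu(\theta) > 0$; hence $k$ is strictly integrally positive definite with respect to $\mathcal{M}$. Taking $\mathcal{M} = \mathcal{M}(X)$ and $A=\mathbb{R}$ (legitimate since $\nu\ne0$ forces $\nu(\mathbb{R}) > 0$), and recalling that $\Phi_{k_\theta}$ is injective iff $k_\theta$ is strictly integrally positive definite with respect to $\mathcal{M}(X)$, gives the statement on injective kernel mean embeddings. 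For the final claim, \citet[Proposition 2.2]{SteinwartZiegel2021} says that a bounded measurable kernel is characteristic iff it is strictly integrally positive definite with respect to $\mathcal{M}_0(X)$; applying the preceding step with $\mathcal{M} = \mathcal{M}_0(X)$, $A = \mathbb{R}$, and using that $k$ is bounded, shows $k$ is characteristic.

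The only genuinely delicate point is the measure-theoretic bookkeeping around Fubini: checking the joint measurability it requires (supplied by hypothesis) and the integrability of $(x,x',\theta)\mapsto k_\theta(x,x')$ against the finite measure $|\mu|\otimes|\mu|\otimes\nu$ (which is where uniform boundedness is essential), together with the preliminary verification that $k$ is itself a measurable kernel so that $H$, $\|\cdot\|_H$, and the equality $\mathcal{M}^k(X) = \mathcal{M}(X)$ all make sense. Everything beyond that is an immediate consequence of \eqref{eq:intspd}.
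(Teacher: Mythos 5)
Your proposal is correct and follows essentially the same route as the paper's proof: the identity $\|\mu\|_H^2=\int\|\mu\|_{H_\theta}^2\diff\nu(\theta)$ via the double-integral formula and Fubini (justified by joint measurability and uniform boundedness against finite measures), followed by the observation that positivity of the integrand on a set of positive $\nu$-measure forces $\|\mu\|_H>0$. You merely spell out details the paper leaves implicit, such as verifying that $k$ is a bounded measurable kernel and deriving the injectivity and characteristicness statements by specialising $\mathcal{M}$ to $\mathcal{M}(X)$ and $\mathcal{M}_0(X)$.
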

\begin{proof}
By the definition of the norm $\|\cdot\|_H$ at \eqref{eq:Hnorm} and Fubini's theorem, we have that for all $\mu \in \mathcal{M}(X)$,
\begin{align*}
\|\mu\|_H^2 &= \int_X \int_X k(x,x') \diff \mu(x)\diff \mu(x') \\& = \int_X \int_X \int_\mathbb{R} k_\theta (x,x')\diff \nu(\theta) \diff \mu(x)\diff \mu(x')  \\&= \int_\mathbb{R} \int_X \int_X  k_\theta (x,x') \diff \mu(x)\diff \mu(x') \diff \nu(\theta) = \int \|\mu\|_{H_\theta}^2 \diff \nu(\theta).
\end{align*}
Let $\mu \in \mathcal{M}$ with $\mu \not=0$. If $\|\mu\|_{H_\theta} > 0$ for all $\theta$ in a set $A$ of strictly positive $\nu$-measure, then it follows that $\|\mu\|_H > 0$ by the formula we have just shown.
\end{proof}

\subsection{Further definitions and notation}

Let $(X,\tau)$ be a Hausdorff space and $\mathcal{B}(X)$ its Borel-$\sigma$-algebra. 
A Borel measure $\nu$ on $X$ is a non-negative measure on $\mathcal{B}(X)$ such that $\nu(K) < \infty$ for all compact sets $K \subseteq X$. A Borel measure $\nu$ is called a Radon measure if it is (inner and outer) regular. Let $\mathcal{M}^*(X)$ denote the space of all finite signed Radon measures on $X$ with the total variation norm. By Ulam's theorem \citep[Lemma 26.2]{Bauer2001}, if $X$ is a Polish space, then every finite Borel measure is a finite Radon measure. 

For a locally compact Hausdorff space $(X,\tau)$, let $C_0(X)$ be the space of continuous functions that vanish at infinity with the supremum norm. By Riesz's representation theorem \citep[Theorem 6.19]{Rudin1970}, the topological dual $C_0(X)'$ of $C_0(X)$ is equal to $\mathcal{M}^*(X)$. A continuous kernel $k$ on $X$ is called \emph{universal} if $H_k \subseteq C_0(X)$ and $H_k$ is dense in $C_0(X)$. A bounded continuous kernel $k$ on $X$ is called a $C_0(X)$-kernel if $H_k \subseteq C_0(X)$. \citet[Theorem 3.13]{SteinwartZiegel2021} show that a $C_0(X)$-kernel is universal if and only if it is strictly integrally positive definite with respect to $\mathcal{M}^*(X)$. If such a kernel exists, then $\mathcal{M}^*(X) = \mathcal{M}(X)$ by \citet[Theorem 3.12]{SteinwartZiegel2021}.

A Radon measure $\mathcal{W}$ on a Banach space $B$ is called a centered Radon Gaussian measure if the image measure $x'\#\mathcal{W}$ on $\mathbb{R}$ is centered Gaussian for all $x' \in B'$, where $B'$ denotes the topological dual of $B$; see \citet{Bogachev1998} for a comprehensive introduction to Gaussian measures on infinite dimensional spaces. For two Banach spaces $B$ and $C$, we denote the set of continuous linear operators from $B$ to $C$ by $L(B,C)$. For $x \in B$ and $x' \in B'$, we write $\langle x',x\rangle = x'(x)$ for the duality relation. We sometimes identify elements of $B$ with elements of $B''$ via the canonical embedding. 

For a normed vector space $(V,\|\cdot\|)$, a symmetric function $k$ on $V \times V$ is called radial if it is of the form $k(x,y) = \varphi(\|x-y\|^2)$, $x,y\in V$ for some function $\varphi:[0,\infty) \to \mathbb{R}$.

\section{Radial kernels on Hilbert spaces}\label{sec:Hilbert}

In this section, we characterize continuous integrally strictly positive definite radial kernels on separable Hilbert spaces 

Continuous (strictly) positive definite radial functions on Hilbert spaces have been characterized by \citet{Schoenberg1938} and \citet[Proposition 4]{BachocSuvorikovaETAL2020}. The latter reference extends Schoenberg's results for separable Hilbert spaces to general Hilbert spaces. For the continuous radial function $k$ to be positive definite, that is, a kernel, the function $\varphi$ has to be of the form
\begin{equation}\label{eq:varphi}
\varphi(t) = \int_{[0,\infty)} e^{-x t} \diff \nu(x), \quad t \ge 0,
\end{equation}
where $\nu$ is a finite Borel measure on $[0,\infty)$, that is, $\varphi$ is the Laplace transform of $\nu$. We denote the class of all Laplace transforms of finite Borel measures on $[0,\infty)$ by $\Phi_\infty$. If $\supp \nu \not=\{0\}$ and $\nu\not=0$, then $\varphi$ at \eqref{eq:varphi} induces a strictly positive definite function on $H$. We denote the class of all such functions with $\Phi_\infty^+$. The method of proof of \citet{BachocSuvorikovaETAL2020} does not use that the Hilbert space is complete. In fact, their argument works for any vector space $V$ with a scalar product $\langle \cdot,\cdot \rangle$, and functions of the form $\varphi(\langle x-y,x-y\rangle)$, $x,y \in X$. We make use of this fact in the proof of Theorem \ref{thm:Banachkernel}.

\citet[Theorem 2.1]{ChristmannSteinwart2010} show that the Gaussian kernel $\exp(-\|x-y\|_H^2/2)$, $x,y \in H$, is universal on every compact subset $K$ of a separable Hilbert space $H$, that is it is strictly integrally positive definite with respect to $\mathcal{M}(K)$. They suggest that their result can be generalized to hold for arbitrary $\varphi \in \Phi_\infty^+$ and this is indeed the case by Lemma \ref{lem:mixlemma}. In summary, every continuous strictly positive definite radial kernel is universal on every compact subset of $H$. 
We show that the restriction to compact subsets of $H$ is not necessary. 

\begin{theorem}\label{thm:Hilbertkernel}
	Let $H$ be a separable Hilbert space. Then the Gaussian kernel on $H$ is integrally strictly positive definite with respect to $\mathcal{M}(H)$. 
\end{theorem}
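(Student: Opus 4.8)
The plan is to reduce the statement to the uniqueness of characteristic functionals of measures on $H$, using the Fock-space (Mercer) expansion of the Gaussian kernel. Since $k(x,y)=\exp(-\|x-y\|_H^2/2)$ is bounded we have $\mathcal M(H)=\mathcal M^k(H)$, so it suffices to prove $\|\mu\|_{H_k}^2>0$ for every $\mu\in\mathcal M(H)\setminus\{0\}$. Fix an orthonormal basis $(e_j)_{j\ge1}$ of $H$ and write $x_j=\langle e_j,x\rangle$. Starting from $\exp(-\|x-y\|^2/2)=e^{-\|x\|^2/2}e^{-\|y\|^2/2}\sum_{n\ge0}\langle x,y\rangle^n/n!$ and expanding $\langle x,y\rangle^n$ by the multinomial theorem, one obtains a feature representation $k(x,y)=\langle\Phi(x),\Phi(y)\rangle_{\ell^2}$, with $\ell^2$ over the (countable) set of finitely supported multi-indices $\alpha$, where $\Phi(x)=(\psi_\alpha(x))_\alpha$ and $\psi_\alpha(x)=(\alpha!)^{-1/2}e^{-\|x\|^2/2}\prod_j x_j^{\alpha_j}$; one checks $\|\Phi(x)\|_{\ell^2}^2=k(x,x)=1$. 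Because $\Phi$ is bounded and weakly measurable it is Pettis-integrable against every $\mu\in\mathcal M(H)$, and combining \eqref{eq:intspd} with the defining property \eqref{eq:Pettis} of the Pettis integral gives
\[
\|\mu\|_{H_k}^2=\Bigl\|\int_H\Phi\,\diff\mu\Bigr\|_{\ell^2}^2=\sum_{\alpha}\frac{1}{\alpha!}\Bigl(\int_H e^{-\|x\|^2/2}\prod_j x_j^{\alpha_j}\,\diff\mu(x)\Bigr)^2 .
\]

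Suppose now $\|\mu\|_{H_k}=0$; I will deduce $\mu=0$. Introduce the exponentially tilted measure $\nu$ defined by $\diff\nu=e^{-\|x\|^2/2}\diff\mu$. Since $0<e^{-\|x\|^2/2}\le1$, $\nu$ is a finite signed measure (automatically Radon, as $H$ is Polish), $\nu=0$ iff $\mu=0$, and $\int_H e^{\|x\|^2/4}\diff|\nu|(x)\le|\mu|(H)<\infty$. The displayed identity says that all monomial moments of $\nu$ vanish, $\int_H\prod_j x_j^{\alpha_j}\,\diff\nu(x)=0$ for every finitely supported $\alpha$. For arbitrary $a\in H$, applying this to the orthogonal projections $P_ma$ of $a$ onto $\operatorname{span}(e_1,\dots,e_m)$, expanding $\langle P_ma,x\rangle^n$ into monomials, and letting $m\to\infty$ (dominated convergence holds because $e^{-\|x\|^2/2}|\langle a,x\rangle|^n\le\|a\|^n\sup_{s\ge0}s^ne^{-s^2/2}$, a finite bound independent of $x$), I obtain $\int_H\langle a,x\rangle^n\,\diff\nu(x)=0$ for all $a\in H$ and all $n\ge0$.

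Finally, the bound $\int_H e^{\|a\|\,\|x\|}\diff|\nu|\le e^{\|a\|^2}\int_H e^{\|x\|^2/4}\diff|\nu|<\infty$ licenses expanding $e^{i\langle a,x\rangle}$ in its Taylor series and integrating term by term, so that the characteristic functional of $\nu$ satisfies $\widehat\nu(a)=\sum_{n\ge0}\frac{i^n}{n!}\int_H\langle a,x\rangle^n\,\diff\nu(x)=0$ for every $a\in H$. By uniqueness of characteristic functionals of finite signed Radon measures on a separable Hilbert space --- reducing, for $a_1,\dots,a_m\in H$, to the image measure of $\nu$ under $x\mapsto(\langle a_1,x\rangle,\dots,\langle a_m,x\rangle)$, which has vanishing Fourier transform on $\R^m$ hence is $0$, and noting that cylinder sets form a $\pi$-system generating $\mathcal B(H)$ --- we conclude $\nu=0$, and therefore $\mu=0$. (The extension to all of $\Phi_\infty^+$ will then follow by writing $\varphi$ as a mixture of rescaled Gaussian kernels and invoking Lemma~\ref{lem:mixlemma}.)

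The main obstacle is conceptual rather than technical: the standard finite-dimensional proof, which writes the Gaussian kernel as the Fourier transform of a strictly positive density and concludes from $\widehat\mu\not\equiv0$, has no literal analogue on $H$ because there is neither a Lebesgue measure nor a standard Gaussian measure on an infinite-dimensional $H$. The Fock-space expansion sidesteps this by turning the problem into a moment problem for the tilted measure $\nu$, which is moment-determinate thanks to its finite sub-Gaussian exponential moment; the one genuinely infinite-dimensional ingredient that remains is the classical uniqueness theorem for characteristic functionals on $H$. Of the routine steps, the most delicate is justifying the first displayed identity, i.e. that the Mercer sum may be interchanged with the double $\mu$-integral --- this is exactly where boundedness of the feature map $\Phi$ and the Pettis-integral machinery from Section~\ref{sec:prelim} enter.
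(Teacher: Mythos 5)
Your proposal is correct, but it reaches the conclusion by a genuinely different route than the paper. The paper's proof uses that $\|\mu\|_{H_k}=0$ forces the kernel mean embedding $\Phi_k(\mu)\in H_k$ to vanish pointwise; the factorization $e^{-\|x-y\|^2/2}=e^{-\|x\|^2/2}e^{\langle x,y\rangle}e^{-\|y\|^2/2}$ then says that the tilted positive and negative parts $\diff\tilde\mu_\pm=e^{-\|x\|^2/2}\diff\mu_\pm$ have identical, everywhere finite moment generating functions on $H$, and a purpose-built lemma (Lemma~\ref{lem:mgfHilbert}: MGF-uniqueness on a separable Hilbert space, proved by projecting onto finite-dimensional coordinate subspaces and passing to the limit) finishes the argument. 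You never evaluate $\Phi_k(\mu)$ pointwise: the Fock-space expansion turns $\|\mu\|_{H_k}^2=0$, via \eqref{eq:intspd}, into the vanishing of all monomial moments of the tilted signed measure $\nu$, the sub-Gaussian bound $\int e^{\|x\|^2/4}\diff|\nu|<\infty$ makes $\nu$ determinate along every linear functional, and term-by-term resummation gives $\widehat\nu\equiv 0$, so the classical uniqueness of characteristic functionals (again reduced to finite-dimensional marginals) yields $\nu=0$ and hence $\mu=0$. Both arguments share the exponential tilting and the reduction to finite dimensions by coordinate projections; the paper's is shorter because pointwise vanishing of the embedding hands it the Laplace-transform identity directly and the finite-dimensional MGF uniqueness is classical, while yours trades that for more bookkeeping (multinomial expansion, dominated-convergence limits, Taylor resummation) but needs no bespoke MGF lemma, only standard Fourier uniqueness, and it keeps the signed measure intact rather than splitting into $\mu_\pm$. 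The interchanges you flag are indeed licensed by the bounds you give; for the first display one can even avoid the Pettis detour, since $\sum_\alpha|\psi_\alpha(x)\psi_\alpha(y)|\le\|\Phi(x)\|_{\ell^2}\|\Phi(y)\|_{\ell^2}=1$ lets Tonelli--Fubini exchange the Mercer sum with the double integral against the finite measure $|\mu|\otimes|\mu|$.
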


\citet[Proposition 6]{HayatiFukumizuETAL2020} have previously shown that the kernel mean embedding of the Gaussian kernel on $H$ is injective restricted to the family of Gaussian measures on $H$. \citet[Proposition 12]{WynneDuncan2020} show that the Gaussian kernel is characteristic on $H$. What distinguishes Theorem~\ref{thm:Hilbertkernel} from the latter result is the broader class of measures on which the associated kernel mean embedding is injective (integrally strictly positive definite versus characteristic) and the shorter proof provided for our more general result. Additionally, Lemma \ref{lem:mixlemma} yields the following immediate corollary.

\begin{corollary}\label{cor:Hilbertkernel}
Let $H$ be a separable Hilbert space and let $\varphi \in \Phi_\infty^+$. Then the kernel $k$ on $H$ defined by  
	\begin{equation*}\label{eq:Hilbertkernel}
	k(x,y) := \varphi\big(\|x-y\|^2\big), \quad x,y \in H,
	\end{equation*}
is integrally strictly positive definite with respect to $\mathcal{M}(H)$. 
\end{corollary}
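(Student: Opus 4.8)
The plan is to write $k$ as a $\nu$-mixture of rescaled Gaussian kernels and to conclude with Lemma~\ref{lem:mixlemma} and Theorem~\ref{thm:Hilbertkernel}. By the definition of $\Phi_\infty^+$ there is a finite Borel measure $\nu$ on $[0,\infty)$ with $\nu\neq 0$ and $\supp\nu\neq\{0\}$ such that $\varphi(s)=\int_{[0,\infty)}e^{-ts}\diff\nu(t)$ for $s\ge 0$; we view $\nu$ as a finite Borel measure on $\mathbb{R}$ concentrated on $[0,\infty)$. Then for all $x,y\in H$,
\[
k(x,y)=\varphi\big(\|x-y\|^2\big)=\int_{[0,\infty)}e^{-t\|x-y\|^2}\diff\nu(t)=\int_{\mathbb{R}}k_t(x,y)\diff\nu(t),
\]
where $k_t(x,y):=e^{-t\|x-y\|^2}$ for $t\ge 0$ and, say, $k_t:=0$ for $t<0$ (the latter choice being irrelevant since $\nu\big((-\infty,0)\big)=0$).

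Next I would observe that each $k_t$ with $t>0$ is integrally strictly positive definite with respect to $\mathcal{M}(H)$. Indeed, $\langle x,y\rangle_t:=2t\langle x,y\rangle_H$ is an inner product on $H$ whose norm is equivalent to the original one, so $(H,\langle\cdot,\cdot\rangle_t)$ is again a separable Hilbert space carrying the same topology, hence the same Borel $\sigma$-algebra and the same space $\mathcal{M}(H)$. Since $\|x-y\|_t^2=2t\|x-y\|^2$, we have $k_t(x,y)=\exp(-\|x-y\|_t^2/2)$, which is precisely the Gaussian kernel of $(H,\langle\cdot,\cdot\rangle_t)$. Applying Theorem~\ref{thm:Hilbertkernel} to this renormed space therefore gives $\|\mu\|_{H_{k_t}}^2>0$ for every $\mu\in\mathcal{M}(H)\backslash\{0\}$ (note that $k_t$ is bounded, so $\mathcal{M}^{k_t}(H)=\mathcal{M}(H)$).

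It then remains to verify the hypotheses of Lemma~\ref{lem:mixlemma}. The family $\{k_t\}_{t\in\mathbb{R}}$ is uniformly bounded, since $0\le k_t\le 1$. The map $(x,y,t)\mapsto k_t(x,y)$ is $\mathcal{B}(H)\otimes\mathcal{B}(H)\otimes\mathcal{B}(\mathbb{R})$-measurable: on $H\times H\times[0,\infty)$ it is continuous, on the complement it vanishes, and for the separable metric space $H$ the Borel $\sigma$-algebra of a finite product coincides with the product of the Borel $\sigma$-algebras. By construction $\int_{\mathbb{R}}k_t\diff\nu(t)=k$. Finally, set $A=(0,\infty)$: since $\supp\nu\neq\{0\}$ there is a point $t_0\in\supp\nu$ with $t_0>0$, and every neighborhood of $t_0$ has positive $\nu$-measure, so $\nu(A)>0$; by the previous paragraph each $k_t$ with $t\in A$ is strictly integrally positive definite with respect to $\mathcal{M}(H)$. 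Lemma~\ref{lem:mixlemma} now yields that $k$ is integrally strictly positive definite with respect to $\mathcal{M}(H)$.

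The argument is short once Theorem~\ref{thm:Hilbertkernel} is in hand; the only genuinely technical points are the joint measurability needed for the Fubini step inside Lemma~\ref{lem:mixlemma} — which is exactly where separability of $H$ is used — and the elementary observation that $\mathcal{M}(H)$ is unchanged under an equivalent renorming of $H$. I expect no real obstacle beyond this bookkeeping. (An alternative to the renorming step is to write $k_t(x,y)=g(\sqrt{2t}\,x,\sqrt{2t}\,y)$ for the Gaussian kernel $g$ and to push $\mu$ forward under the linear homeomorphism $x\mapsto\sqrt{2t}\,x$, which reduces $\|\mu\|_{H_{k_t}}^2$ to $\|\mu_t\|_{H_g}^2$; either route works.)
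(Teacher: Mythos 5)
Your proof is correct and follows essentially the same route as the paper, which obtains the corollary directly from Theorem~\ref{thm:Hilbertkernel} and Lemma~\ref{lem:mixlemma} by viewing $\varphi(\|x-y\|^2)$ as a $\nu$-mixture of rescaled Gaussian kernels. The renorming observation and the measurability bookkeeping you supply are exactly the details the paper leaves implicit in calling this an ``immediate corollary.''
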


The proof of Theorem \ref{thm:Hilbertkernel} relies on the following lemma, that generalizes a standard result in probability theory.  

\begin{lemma}\label{lem:mgfHilbert}
	Let $H$ be a separable Hilbert space and $\mu_1, \mu_2 \in \mathcal{M}_+(H)$. Define
\[
m_{j}(y)=\int \exp(\langle x, y \rangle) \diff\mu_{j}(x) \in [0,\infty]
\]	
for $j=1,2$ and arbitrary $x\in H$. Suppose that for some $\varepsilon > 0$, $m_1(y) = m_2(y) < \infty$ whenever $\|y\| \le \varepsilon$. Then $\mu_1=\mu_2$. 
\end{lemma}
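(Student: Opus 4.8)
The plan is to reduce the claim to the classical finite-dimensional fact that a finite Borel measure on $\mathbb{R}^d$ is determined by its moment generating function on a neighbourhood of the origin, and to pass to the Hilbert-space setting by projecting onto finite-dimensional subspaces. First I would fix an orthonormal basis $(e_i)_{i\ge 1}$ of the separable space $H$ and let $P_n$ denote the orthogonal projection onto $V_n = \operatorname{span}(e_1,\dots,e_n)$. For each $n$, consider the pushforward measures $\mu_j^{(n)} := P_n \# \mu_j$ on $V_n \cong \mathbb{R}^n$. For $y \in V_n$ we have $\langle x, y\rangle_H = \langle P_n x, y\rangle_H$, so the moment generating function of $\mu_j^{(n)}$ at $y$ equals $m_j(y)$, which by hypothesis is finite and agrees for $j=1,2$ whenever $\|y\| \le \varepsilon$. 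The standard result in $\mathbb{R}^n$ (analyticity of the Laplace transform on the tube over a neighbourhood of $0$, or the Cramér–Wold plus uniqueness argument) then gives $\mu_1^{(n)} = \mu_2^{(n)}$ for every $n$.

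Next I would upgrade the equality of all finite-dimensional projections to equality of $\mu_1$ and $\mu_2$ themselves. Since $P_n x \to x$ for every $x \in H$, the maps $P_n$ converge pointwise to the identity, and hence $\mu_j^{(n)} \to \mu_j$ weakly (composing $P_n$ with a bounded continuous $f$ gives $f \circ P_n \to f$ pointwise, and one invokes dominated convergence against the finite measure $\mu_j$). Weak limits of measures are unique, so $\mu_1 = \mu_2$. Alternatively, one can argue that the cylinder sets $\{x : (\langle x,e_1\rangle,\dots,\langle x,e_n\rangle) \in B\}$, $B \in \mathcal{B}(\mathbb{R}^n)$, generate $\mathcal{B}(H)$ (here separability is used, so that the Borel $\sigma$-algebra coincides with the cylindrical $\sigma$-algebra), and the $\mu_j^{(n)}$ being equal says precisely that $\mu_1$ and $\mu_2$ agree on this generating $\pi$-system; a monotone-class / Dynkin argument then finishes. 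I would probably present the cylindrical-$\sigma$-algebra version, since it is cleaner and makes the role of separability explicit.

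The one point that genuinely needs care — and which I expect to be the main obstacle — is justifying finiteness of $m_j$ on the whole ball $\{\|y\|\le\varepsilon\}$ and, through it, the analyticity step used in the finite-dimensional uniqueness. The hypothesis already grants $m_1(y) = m_2(y) < \infty$ for $\|y\|\le\varepsilon$, which is exactly what one needs: restricting to $y\in V_n$ with $\|y\|\le\varepsilon$, the function $m_j^{(n)}(y) = \int_{V_n} e^{\langle z,y\rangle}\diff\mu_j^{(n)}(z)$ is finite on an open ball of $\mathbb{R}^n$, and the classical theorem (e.g. via the holomorphic extension of $\zeta \mapsto \int e^{\langle z,\zeta\rangle}\diff\mu_j^{(n)}(z)$ to the complex tube, whose boundary values are the characteristic functions, which then coincide and force $\mu_1^{(n)}=\mu_2^{(n)}$) applies verbatim. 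So no new integrability estimate is actually required beyond what is assumed; the real work is organizing the projection/cylinder-$\sigma$-algebra bookkeeping correctly. I would state the $\mathbb{R}^n$ uniqueness as a cited classical lemma and keep the Hilbert-space reduction as the only written content.
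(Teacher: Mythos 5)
Your proposal is correct and follows essentially the same route as the paper's proof: project onto finite-dimensional subspaces spanned by an orthonormal basis, invoke the classical uniqueness of finite measures on $\mathbb{R}^n$ from their moment generating functions near the origin, and recover $\mu_1=\mu_2$ from the equality of all projections (the paper does this via $\Pi_n X_j \to X_j$ almost surely and dominated convergence against bounded continuous test functions, which is exactly your first option). Your alternative cylindrical-$\sigma$-algebra argument is a minor cosmetic variation of the same idea.
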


\begin{proof}
Note that $m_j(0) = \mu_j(H)$, so $\mu_1(H) = \mu_2(H)$, and we may assume without loss of generality that $\mu_1, \mu_2 \in \mathcal{M}_1(H)$. If $\dim(H) = d < \infty$, we may identify $H$ with $\R^d$, equipped with its standard inner product. In that case, the result is well-known from probability theory.

Let $\dim(H) = \infty$. Separability of $H$ implies the existence of a complete orthonormal system $(\phi_i)_{i \in \mathbb{N}}$ in $H$. For $n \in \mathbb{N}$, let $\Pi_n$ be the orthogonal projection from $H$ onto the $n$-dimensional linear subspace $H_n := \operatorname{span}(\phi_1,\ldots,\phi_n)$. If $X_1$ and $X_2$ are random variables with distributions $\mu_1$ and $\mu_2$, respectively, then
\[
	m_j(y) = \mathbb{E} \exp(\langle X_j, y\rangle)
\]
for $j = 1,2$ and $y \in H$. Specifically, if $y \in H_n$ for some $n \in \mathbb{N}$, then
\[
	m_j(y) = \mathbb{E} \exp(\langle \Pi_n X_j, y\rangle) .
\]
Consequently, if $m_1(y) = m_2(y) < \infty$ whenever $\|y\| \le \varepsilon$, then $\Pi_n X_1$ and $\Pi_n X_2$ have the same distribution for any $n \in \mathbb{N}$. But this implies that $\mu_1 = \mu_2$, because $\Pi_n X_j \to X_j$ almost surely as $n \to \infty$ for $j = 1,2$, and hence, for any bounded continuous function $f : H \to \mathbb{R}$, by dominated convergence,
\[
	\int f(x) \diff\mu_1(x) 
	=  \lim_{n\to\infty} \mathbb{E} f(\Pi_n X_1) = \lim_{n\to\infty} \mathbb{E} f(\Pi_n X_2) 
	=  \int f(x) \diff\mu_2(x) .
\]
\end{proof}

\begin{proof}[Proof of Theorem~\ref{thm:Hilbertkernel}]
Let $\mu = \mu_+ - \mu_- \in \mathcal{M}(H)$ with $\mu_+,\mu_- \in \mathcal{M}_+(H)$. Then, if $k(x,y) = \exp(-\|x-y\|_H^2/2)$ is the Gaussian kernel and $y \in H$,
\[
	\Phi_k(\mu)(y) = \exp\Big(- \frac{\|y\|_H^2}{2}\Big)
		\Bigl( \int \exp(\langle x,y\rangle) \diff\tilde{\mu}_+(x)
			- \int \exp(\langle x,y\rangle) \diff\tilde{\mu}_-(x) \Bigr)
\]
with the finite measures $\tilde{\mu}_+, \tilde{\mu}_-$ given by $\diff\tilde{\mu}_j(x) := \exp(-\|x\|_2^2/2) \diff\mu(x)$. Thus, $\Phi_k(\mu) = 0$ is equivalent to
\[
	\int \exp(\langle x,y\rangle_{H}^{}) \diff\tilde{\mu}_+(x) 
	=  \int \exp(\langle x,y\rangle_{H}^{}) \diff\tilde{\mu}_-(x), \quad \text{for all $y \in H$.}
\]
By definition of the measures $\tilde{\mu}_\pm$, the latter integrals are finite. It follows from Lemma~\ref{lem:mgfHilbert} that $\tilde{\mu}_+ = \tilde{\mu}_-$. But $\mu_\pm$ may be represented as $\mu_\pm(dx) = \exp(\|x\|_{H}^2/2) \diff \tilde{\mu}_\pm(x)$, so $\mu_+ = \mu_-$, that is, $\mu = 0$.
\end{proof}

While we focus on radial kernels in this section, our results carry over to kernels obtained by composing arguments via suitable classes of maps, in the spirit of the SE-T kernels of \citet{WynneDuncan2020}. In particular, Proposition~\ref{prop:genT4_WandD} shows that \citet[Theorem 4]{WynneDuncan2020} holds when replacing the squared exponential kernel with any radial kernel in $\Phi_\infty^+$, thus straightforwardly proving \citet[Corollary 1]{WynneDuncan2020}.

\section{Characteristic kernels on Banach spaces}

\subsection{General results}\label{sec:Banach}

\citet[Section 5.2]{BergChristensenETAL1984} discuss the question whether the Gaussian kernel, that is, $t \mapsto \exp(-t^2)$ leads to a radial kernel on a Banach space $B$. The answer is negative in the following sense \citep[Lemma 5.2.2.2]{BergChristensenETAL1984}: If $k(x,y) = \exp(-\lambda \|x-y\|^2)$ is positive definite on $B$, then $B$ is already a Hilbert space. Therefore, there is no hope for a generalization of Theorem \ref{thm:Hilbertkernel} to Banach spaces. 

On the Banach space $C([0,1])$ of continuous functions on $[0,1]$ with the supremum norm $\|\cdot\|_{\infty}$, the situation is even more drastic in the sense that $\varphi(\|f-g\|_\infty)$, $f,g \in C([0,1])$ is not positive definite for any $\varphi \in \Phi_\infty^+$ \citep[Theorem 5.3.2.3]{BergChristensenETAL1984}. An intermediate situation is encountered for $L^p$-spaces, where the Gaussian kernel does not lead to a radial kernel for $p\not=2$ but the Laplace kernel, that is, $\exp(-\|f-g\|_{L^p})$, $f,g \in L^p$ does; compare Corollary \ref{cor:LpLaplace}.

In this section, we provide two construction principles for integrally strictly positive definite or characteristic kernels on Banach spaces. The first one (Theorem \ref{thm:Banachkernel}) yields kernels that are similar to Gaussian kernels but with respect to a different norm, whereas the second one yields variants of Laplace kernels (Theorem \ref{thm:Laplacekernel}). 

\begin{theorem}\label{thm:Banachkernel}
Let $B$ be a Banach space and $A \in L(B'',B')$ an operator that induces a non-degenerate centered Radon Gaussian measure on $B'$ with covariance $A$, and let $\varphi \in \Phi_\infty^+$. Then, 
\begin{equation}\label{eq:Banachkernel}
k(x,y) := \varphi\Big(\langle A(x-y),x-y\rangle\Big), \quad x,y \in B \subseteq B'',
\end{equation}
is a kernel on $B$ that is integrally strictly positive definite with respect to $\mathcal{M}^*(B)$.
\end{theorem}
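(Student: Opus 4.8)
The plan is to reduce Theorem~\ref{thm:Banachkernel} to Corollary~\ref{cor:Hilbertkernel} (the radial-kernel version of Theorem~\ref{thm:Hilbertkernel}) by passing to the Cameron--Martin space of the Gaussian measure $\mathcal{W}$ on $B'$ induced by $A$. First I would recall that a centered Radon Gaussian measure $\mathcal{W}$ on $B'$ with covariance operator $A \in L(B'',B')$ has a Cameron--Martin space $H_{\mathcal{W}} \subseteq B'$ which is a separable Hilbert space, and that the map $x \mapsto A x$ (for $x \in B'' = (B')'$ acting as a functional on $B'$) factors through $H_{\mathcal{W}}$: there is a linear map $R\colon B'' \to H_{\mathcal{W}}$ with $\langle A x, y \rangle = \langle R x, R y\rangle_{H_{\mathcal{W}}}$ for all $x, y \in B''$, so in particular
\[
\langle A(x-y), x-y\rangle = \|R x - R y\|_{H_{\mathcal{W}}}^2, \qquad x, y \in B.
\]
Thus $k(x,y) = \varphi\big(\|Rx - Ry\|_{H_{\mathcal{W}}}^2\big)$ is the pullback under $R$ of the radial kernel $\tilde k(u,v) = \varphi(\|u-v\|_{H_{\mathcal{W}}}^2)$ on the separable Hilbert space $H_{\mathcal{W}}$; since $\tilde k$ is positive definite (indeed a kernel, by Schoenberg / \citealp{BachocSuvorikovaETAL2020}, using only the inner-product structure as noted after \eqref{eq:varphi}), $k$ is a kernel on $B$.

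The second and main step is to show integral strict positive definiteness with respect to $\mathcal{M}^*(B)$. Fix $\mu \in \mathcal{M}^*(B) \setminus \{0\}$. By the factorization above and the identity \eqref{eq:intspd},
\[
\|\mu\|_{H_k}^2 = \int_B \int_B \tilde k(Rx, Ry)\diff\mu(x)\diff\mu(y) = \|R_\#\mu\|_{H_{\tilde k}}^2,
\]
where $R_\#\mu$ is the pushforward of $\mu$ under $R\colon B \to H_{\mathcal{W}}$; this requires checking that $R$ is Borel measurable (it is, being continuous from $B$ with the weak-$*$ topology inherited from $B''$, or one argues via weak measurability and separability of $H_{\mathcal{W}}$) and that $R_\#\mu$ is a finite signed Radon measure on the separable — hence Polish — Hilbert space $H_{\mathcal{W}}$, so that Corollary~\ref{cor:Hilbertkernel} applies (via Ulam's theorem, cited in Section~\ref{sec:prelim}, every finite Borel measure on a Polish space is Radon, so $\mathcal{M}^*(H_{\mathcal{W}}) = \mathcal{M}(H_{\mathcal{W}})$ and the corollary gives strict integral positive definiteness against all of $\mathcal{M}(H_{\mathcal{W}})$). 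By Corollary~\ref{cor:Hilbertkernel}, $\|R_\#\mu\|_{H_{\tilde k}}^2 > 0$ provided $R_\#\mu \neq 0$, so it remains to show $R_\#\mu \neq 0$, i.e.\ that $\mu = 0$ whenever $R_\#\mu = 0$.

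This injectivity-of-pushforward claim is where the non-degeneracy hypothesis on $\mathcal{W}$ enters, and I expect it to be the main obstacle. The point is that $R_\#\mu = 0$ forces $\int_B \exp(i\langle x', x\rangle)\diff\mu(x) = 0$ for all $x'$ in a rich enough subset of $B'$ to separate finite signed Radon measures on $B$. Concretely, $H_{\mathcal{W}}$ sits densely (in an appropriate sense) inside $B'$, and non-degeneracy of $\mathcal{W}$ — equivalently, injectivity of $A$, equivalently $R$ having dense range composed with the fact that $H_{\mathcal{W}}$ is dense in $\operatorname{supp}\mathcal{W}$ which is all of $B'$ — ensures that $R$, viewed as a map recording the action of a separating family of functionals $x' \in H_{\mathcal{W}} \subseteq B'$ on points of $B$, is itself injective; then $R_\#\mu = 0$ implies that the Fourier transform of $\mu$ vanishes on a set separating points of $B$, hence $\mu = 0$ by the fact that $\mathcal{M}^*(B)$ embeds into $C_0$-type duals and characters separate finite Radon measures (one may invoke that for a separating subspace of $B'$ the associated Fourier transform is injective on $\mathcal{M}^*(B)$). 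I would make this last point precise by arguing that since $\mu \neq 0$ is Radon, there is $x' \in B'$ with $\int \exp(i\langle x',x\rangle)\diff\mu \neq 0$; approximating $x'$ by elements of $H_{\mathcal{W}}$ (dense in $B'$ under the topology for which $\mathcal{W}$ is non-degenerate) and using dominated convergence (the integrands are bounded) transfers non-vanishing to some $h \in H_{\mathcal{W}}$, so $R_\#\mu \neq 0$. The bookkeeping around which topology makes $H_{\mathcal{W}}$ dense in $B'$, and ensuring the approximation argument stays inside the class of bounded continuous test functions, is the delicate part; everything else is a routine transfer of Corollary~\ref{cor:Hilbertkernel} along the measurable map $R$.
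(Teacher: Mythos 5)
Your proposal is correct in outline, but it takes a genuinely different route from the paper. The paper treats the Gaussian case directly: writing $\exp\bigl(-\tfrac12\langle A(x-y),x-y\rangle\bigr)$ as the characteristic functional of $\mathcal{W}$ and applying Fubini gives $\|\mu\|_{H_k}^2=\int_{B'}|\hat\mu(x')|^2\diff\mathcal{W}(x')$; non-degeneracy yields full support of $\mathcal{W}$, so $\hat\mu=0$ $\mathcal{W}$-a.e.\ together with norm-continuity of $\hat\mu$ forces $\hat\mu\equiv 0$ on $B'$, whence $\mu=0$ by the uniqueness theorem for Fourier transforms of Radon measures \citep[Proposition A.3.18]{Bogachev1998}; general $\varphi\in\Phi_\infty^+$ is then handled by Lemma~\ref{lem:mixlemma}. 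You instead factor the quadratic form through the Cameron--Martin space, pull back the radial kernel, push $\mu$ forward, and invoke Corollary~\ref{cor:Hilbertkernel}; this absorbs both the general $\varphi$ and the Hilbert-space work into one black box, at the price of the injectivity-of-pushforward step, which you rightly identify as the crux. That step, when made precise, uses exactly the paper's two ingredients: non-degeneracy gives full support, and since the support of a Radon Gaussian measure is the closure of its Cameron--Martin space $H(\mathcal{W})$, the functionals $x\mapsto\langle g,Rx\rangle_{H(\mathcal{W})}=\langle h,x\rangle$ ($h\in H(\mathcal{W})\subseteq B'$) on which $R_\#\mu=0$ forces $\hat\mu$ to vanish form a norm-dense subspace of $B'$; then dominated convergence and the Radon--Fourier uniqueness theorem finish the argument. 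Your reduction is in the spirit of Proposition~\ref{prop:genT4_WandD}, but note that proposition is unavailable here since $B$ need not be Polish (no Purves-type bi-measurability), so the Fourier/density patch is indeed the right fix --- and injectivity of $R$ alone is not what does the work.

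A few points to tighten if you write this up. Separability of $H(\mathcal{W})$ (needed for Corollary~\ref{cor:Hilbertkernel}) holds for Radon Gaussian measures but needs a citation \citep[Theorem 3.2.7]{Bogachev1998}. Measurability of $R|_B$ is immediate from norm-boundedness, $\|Rx''\|_{H(\mathcal{W})}^2=\langle Ax'',x''\rangle\le\|A\|\,\|x''\|^2$; the weak-$*$ remark is unnecessary. The justification ``$\mathcal{M}^*(B)$ embeds into $C_0$-type duals'' is not right, since $B$ is not locally compact; the correct reference is the Fourier uniqueness theorem for Radon measures cited above (the same one the paper uses). Finally, the identity $\langle h,x\rangle=\langle g,Rx\rangle_{H(\mathcal{W})}$ for $h=R_{\mathcal{W}}g$ and $x\in B\subseteq B''$, which links characters on $H(\mathcal{W})$ tested against $R_\#\mu$ to values of $\hat\mu$ on $H(\mathcal{W})\subseteq B'$, should be spelled out; it is the hinge on which your density argument turns.
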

\begin{proof} 
Showing that $k$ is strictly positive definite on $B$ works with the same arguments as \citet[Propositions 2-4]{BachocSuvorikovaETAL2020}.
By definition, the centered Gaussian measure $\mathcal{W}$ on $B'$ with covariance operator $A$ has characteristic function
\begin{equation}\label{eq:charfunc}
	\int_{B'}\exp(i \langle x'',x'\rangle)\diff \mathcal{W}(x')
	= \exp\Big(-\frac{\langle Ax'',x''\rangle}{2}\Big), \quad x'' \in B''.
\end{equation}
Since $\mathcal{W}$ is assumed to be non-degenerate, 
it has full support, 
that is, it assigns positive probability to every non-empty open subset of $B'$. 
Let $\mu \in \mathcal{M}^*(B)$. Then, using \eqref{eq:charfunc} and Fubini's theorem,
\begin{align*}
\|\mu\|_{H_k}^2 &= \int_{B} \int_{B} \exp\Big(-\frac{\langle A(x-y),x-y\rangle}{2}\Big) \diff \mu(x)\diff \mu(y)\\
&= \int_{B'} \int_{B} \int_{B}\exp(i \langle x',x-y\rangle) \diff \mu(x)\diff \mu(y) \diff\mathcal{W}(x')\\
&= \int_{B'} \bigg|\int_{B} \exp(i \langle x',x\rangle) \diff \mu(x)\bigg|^2 \diff\mathcal{W}(x').
\end{align*}
Therefore, if $\|\mu\|_{H_k}^2=0$, then $\hat{\mu}(x'):=\int_{B} \exp(i \langle x',x\rangle) \diff \mu(x) = 0$ for $\mathcal{W}$-almost all $x' \in B'$. The map $\hat{\mu}$ is continuous with respect to weak convergence on $B'$, and therefore also with respect to the operator norm convergence on $B'$. This implies that $\hat{\mu}(x') = 0$ for all $x' \in B'$. Since $\hat{\mu}(0) = \mu^+(B) - \mu^-(B) = 0$, we see that $\mu \in \mathcal{M}_0(B)$, and it is no loss of generality to assume that $\mu^+,\mu^- \in \mathcal{M}_1(B)$. We obtain that the characteristic functions (or Fourier transforms) of $\mu^+$ and $\mu^-$ are equal, and therefore, $\mu = \mu^+ - \mu^- = 0$ \citep[Proposition A.3.18]{Bogachev1998}. Using the covariance operator $\alpha A$ for any $\alpha > 0$ instead of $A$, it follows that the kernel at \eqref{eq:Banachkernel} with $\varphi(t) = \exp(-\alpha t^2)$ is integrally strictly positive definite with respect to $\mathcal{M}^*(B)$. The claim now follows from Lemma \ref{lem:mixlemma}.
\end{proof}

The kernels defined at \eqref{eq:Banachkernel} in Theorem \ref{thm:Banachkernel} are not radial with respect to the norm of the Banach space $B$. However, they are radial with respect to a different norm on $B$ given by $\|x-y\|_A = \sqrt{\langle A(x-y),x-y\rangle}$, $x,y \in B$, with respect to which $B$ is generally not complete.

Recently, \citet{PanTianETAL2018} have suggested the ball divergence, which is a divergence between probability measures on separable Banach spaces. If one of the measures has full support, then the divergence can only be zero if the two measures are identical \citep[Theorem 2]{PanTianETAL2018}. Their divergence cannot be the squared MMDs induced by a kernel on the Banach space since its square root does not satisfy the triangle inequality. However, it is closely related to an MMD \citep[Remarks 2.2 and 2.4]{PanTianETAL2018}.

Characterizing operators $A \in L(B'',B')$ that can be obtained as covariances of Gaussian measures on $B'$ is generally difficult. But there are some notable exceptions such as separable $L^p$-spaces which we treat in Section \ref{sec:Lp}. If $B = H$ is a separable Hilbert space, then the operators that induce non-degenerate centered Gaussian measures are the nuclear, self-adjoint strictly positive operators \citep[Corollary 2.3]{Baxendale1976}. However, Corollary \ref{cor:Hilbertkernel} is stronger than Theorem \ref{thm:Banachkernel} in this case. This can be seen as follows. Let $T:H \to H$ be injective and measurable, $\varphi \in \Phi_\infty^+$ and let $\mu \in \mathcal{M}(H)$. Then, denoting the image measure of $\mu$ under $T$ by $T\#\mu$,
\[
0 = \int \varphi\big(\|T(x-y)\|^2\big)\diff \mu(x)\diff \mu(y) = \int \varphi\big(\|x-y\|^2\big)\diff T\#\mu(x)\diff T\#\mu(y),
\]
implies that $T\#\mu = 0$ by Corollary \ref{cor:Hilbertkernel}. Since $T$ is injective and measurable, it is bi-measurable by Purves' theorem \citep{Purves1966}, that is, we have $T(M) \in \mathcal{B}(H)$ and $T^{-1}(T(M)) = M$. This implies that $\mu = 0$. Taking $T$ to be a Hilbert-Schmidt operator, we obtain the result of Theorem \ref{thm:Banachkernel} for separable Hilbert spaces. 

On some Banach space radial characteristic kernels exist. We show a more general result. Let $(X,\rho)$ be a metric space. It is said to be of \emph{negative type} if $\rho$ is negative definite in the sense of \citep[Definition 3.1.1]{BergChristensenETAL1984}, that is, if for all $n \in \mathbb{N}$, $c_1,\dots,c_n \in \mathbb{R}$ with $\sum_{i=1}^n c_i = 0$ and $x_1,\dots,x_n \in X$, it holds that $\sum_{i,j=1}^n c_i c_j \rho(x_i,x_j) \le 0$. A metric $\rho$ of negative type induces a positive definite kernel $k$ on $X$ by defining $k(x,y) = \rho(x,z_0) + \rho(y,z_0) - \rho(x,y)$ for some fixed $z_0 \in X$ \citep[Lemma 3.2.1]{BergChristensenETAL1984}. The relation between the metric $\rho$ and the induced so-called distance kernel $k$ has been studied in detail by \citet{SejdinovicSriperumbudurETAL2013}. In particular, they show that (under suitable integrability assumptions) the energy distance with respect to $\rho$ is the same as the squared MMD with respect to $k$ \citep[Theorem 22]{SejdinovicSriperumbudurETAL2013}. 
Define
\[
\mathcal{M}_\rho(X) = \{\mu \in \mathcal{M}(X) \;|\; \exists z_0 \in X \;\text{such that}\; \int \rho(x,z_0)\diff |\mu|(x) < \infty\}.
\]
Following \citet{Lyons2013}, the metric space is of \emph{strong negative type} if for all $\mu \in \mathcal{M}_0(X) \cap \mathcal{M}_\rho(X)$, it holds that
\[
\int\int \rho(x,y)\diff \mu(x) \diff \mu(y) = 0 \quad \text{implies that} \quad \mu = 0. 
\]
The notion of a metric of strong negative type \citep[Section 3]{Lyons2013} is the same as requiring the associated distance kernel $k$ to be characteristic (restricted to probability measures $\mu$ that satisfy $\int_X k(x,x) \diff \mu(x) < \infty$).

\begin{theorem}\label{thm:Laplacekernel}
Let $(X,\rho)$ be a metric space of strong negative type and $\varphi \in \Phi_\infty^+$. Then, 
\[
k(x,y):= \varphi(\rho(x,y)), \quad x,y \in X,
\]
is a kernel on $X$ that is characteristic, that is, strictly integrally positive definite with respect to $\mathcal{M}_0(X)$.
\end{theorem}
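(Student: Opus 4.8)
The plan is to use the integral representation $\varphi(t)=\int_{[0,\infty)}e^{-st}\diff\nu(s)$ to reduce, via Lemma~\ref{lem:mixlemma}, to the single-exponential kernels $e^{-s\rho}$ with $s>0$, and then to treat those by coupling the distance-kernel embedding of $(X,\rho)$ with an explicit power-series feature map of $e^{-s\rho}$. For the reduction step: since $\rho$ is negative definite, each $k_\theta:=e^{-\theta\rho}$, $\theta\ge 0$, is positive definite \citep{Schoenberg1938,BergChristensenETAL1984}, so $k(x,y)=\varphi(\rho(x,y))=\int_{[0,\infty)}k_\theta(x,y)\diff\nu(\theta)$ is a non-negative mixture of uniformly bounded (by $1$) kernels, hence a bounded continuous kernel with $\mathcal{M}^k(X)=\mathcal{M}(X)\supseteq\mathcal{M}_0(X)$. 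Because $\varphi\in\Phi_\infty^+$, the support of $\nu$ is a non-empty closed set different from $\{0\}$, so $\nu((0,\infty))>0$. Hence, by Lemma~\ref{lem:mixlemma} applied to the family $(k_\theta)_{\theta\in[0,\infty)}$ with $A=(0,\infty)$, it suffices to show that $k_s=e^{-s\rho}$ is strictly integrally positive definite with respect to $\mathcal{M}_0(X)$ for every $s>0$.

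Now fix $s>0$ and a base point $z_0\in X$, and let $k_\rho(x,y)=\rho(x,z_0)+\rho(y,z_0)-\rho(x,y)$ be the associated distance kernel, which is positive definite. Put $H:=H_{k_\rho}$ and $\iota:=\Phi_{k_\rho}/\sqrt{2}\colon X\to H$, so that $\langle\iota x,\iota y\rangle_H=\tfrac12 k_\rho(x,y)$, hence $\|\iota x-\iota y\|_H^2=\rho(x,y)$, $\iota z_0=0$, and $\|\iota x\|_H^2=\rho(x,z_0)$. The key device is the feature map
\[
\Psi\colon X\to\mathcal{H}:=\bigoplus_{n\ge 0}H^{\otimes n},\qquad
\Psi(x):=e^{-s\|\iota x\|_H^2}\Bigl((2s)^{n/2}(n!)^{-1/2}(\iota x)^{\otimes n}\Bigr)_{n\ge 0}.
\]
Expanding $e^{2s\langle\iota x,\iota y\rangle_H}$ as a power series gives $\langle\Psi(x),\Psi(y)\rangle_{\mathcal{H}}=e^{-s\|\iota x-\iota y\|_H^2}=k_s(x,y)$ and $\|\Psi(x)\|_{\mathcal{H}}=1$; in particular $\Psi$ is bounded and weakly measurable, hence weakly integrable against every $\mu\in\mathcal{M}(X)$, and $\|\mu\|_{H_{k_s}}^2=\int_X\int_X k_s\,\diff\mu\,\diff\mu=\|i_\mu(\Psi)\|_{\mathcal{H}}^2$.

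For the core argument, take $\mu\in\mathcal{M}_0(X)$ with $\|\mu\|_{H_{k_s}}^2=0$, so $i_\mu(\Psi)=0$ and each coordinate of $i_\mu(\Psi)$ in the orthogonal direct sum vanishes. The coordinates $n=0$ and $n=1$ read
\[
\int_X e^{-s\rho(x,z_0)}\diff\mu(x)=0,\qquad \int_X e^{-s\rho(x,z_0)}\langle v,\iota x\rangle_H\,\diff\mu(x)=0\quad(v\in H).
\]
Define the finite signed measure $\tilde\mu$ on $X$ by $\diff\tilde\mu(x):=e^{-s\rho(x,z_0)}\diff\mu(x)$. The first identity gives $\tilde\mu(X)=0$, and $\int_X\rho(x,z_0)\diff|\tilde\mu|(x)=\int_X\rho(x,z_0)e^{-s\rho(x,z_0)}\diff|\mu|(x)\le (es)^{-1}|\mu|(X)<\infty$, so $\tilde\mu\in\mathcal{M}_0(X)\cap\mathcal{M}_\rho(X)$; the second identity says $i_{\tilde\mu}(\iota)=0$ in $H$. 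Expanding the square and using $\tilde\mu(X)=0$ together with $\|\iota\cdot\|_H^2=\rho(\cdot,z_0)\in\mathcal{L}_1(|\tilde\mu|)$ then yields
\[
\int_X\int_X\rho(x,y)\,\diff\tilde\mu(x)\,\diff\tilde\mu(y)=\int_X\int_X\|\iota x-\iota y\|_H^2\,\diff\tilde\mu(x)\,\diff\tilde\mu(y)=-2\|i_{\tilde\mu}(\iota)\|_H^2=0.
\]
Since $(X,\rho)$ is of strong negative type and $\tilde\mu\in\mathcal{M}_0(X)\cap\mathcal{M}_\rho(X)$, it follows that $\tilde\mu=0$; as $e^{-s\rho(x,z_0)}>0$ for every $x\in X$, this forces $\mu=0$, which proves the reduced claim and hence the theorem.

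The main obstacle is precisely the passage from the single scalar equation $\int\int e^{-s\rho}\diff\mu\,\diff\mu=0$ to the vector identity $i_{\tilde\mu}(\iota)=0$: the scalar equation alone only says that the kernel mean embedding of $\mu$ under $k_s$ vanishes, which is essentially a restatement of what we want, so the power-series feature map $\Psi$ does the real work by exposing the ``first-order'' part $(\iota x)^{\otimes 1}$ of that embedding. A secondary subtlety is that strong negative type of $\rho$ only constrains measures in $\mathcal{M}_0(X)\cap\mathcal{M}_\rho(X)$, so one must verify that the reweighted measure $\tilde\mu$ has finite first $\rho$-moment — which is exactly what the Gaussian weight $e^{-s\rho(\cdot,z_0)}$ provides, while keeping $\tilde\mu$ nonzero whenever $\mu$ is. The remaining points — joint measurability of $(x,y,\theta)\mapsto e^{-\theta\rho(x,y)}$ for Lemma~\ref{lem:mixlemma}, weak measurability and integrability of $\iota$ and $\Psi$, and the Fubini-type identities for Pettis integrals — are routine since $\rho$ is continuous.
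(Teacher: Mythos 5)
Your proof is correct and follows essentially the same route as the paper: reduce via Lemma~\ref{lem:mixlemma} to the (scaled) Laplace kernel, reweight $\mu$ by $e^{-s\rho(\cdot,z_0)}$, extract the zeroth- and first-order pieces of the exponential of the distance kernel, and invoke strong negative type. Your tensor-power feature map into $\bigoplus_n H^{\otimes n}$ is just an explicit feature-space rendering of the paper's power-series expansion $e^{k}=\sum_\ell k^\ell/\ell!$ (the $\ell$-th term being the squared norm of your $n=\ell$ coordinate), with the added benefit that you verify the $\rho$-integrability of $\tilde\mu$ explicitly.
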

\begin{proof}
Due to Lemma \ref{lem:mixlemma}, it suffices to show that the Laplace kernel $\exp(-\rho(x,y))$, $x,y \in X$, is strictly integrally positive definite with respect to $\mathcal{M}_0(X)$. Let $z_0 \in X$ and define $k(x,y) = \rho(x,z_0) + \rho(y,z_0) - \rho(x,y)$, $x,y \in X$. Since $\rho$ is a metric, $k$ is non-negative. By \citet[Lemma 3.2.1]{BergChristensenETAL1984}, $k$ is a kernel on $X$, and for any $\ell \in \mathbb{N}$, $k^\ell$ is also a kernel.
For $\mu \in \mathcal{M}_0(X)$, we obtain that
\begin{align*}
	\int\int \exp(-\rho(x,y)) \diff\mu(x)\diff\mu(y)
	&= \int \int \exp(k(x,y)) \diff\tilde\mu(x)\diff\tilde\mu(y) \\
	&= \sum_{\ell = 0}^\infty \frac{1}{\ell! } \int \int k(x,y)^\ell
		\diff \tilde\mu(x)\diff\tilde\mu(y),
\end{align*}
using monotone convergence, where the measure $\tilde\mu \in \mathcal{M}_0(X)$ is given by $\tilde{\mu} = \tilde\mu_+ - \tilde\mu_-$ with $\diff\tilde\mu_\pm(x) = \exp(-\rho(x,z_0))\diff \mu_\pm(x)$ and $\mu_\pm \in \mathcal{M}_+(X)$ are such that $\mu = \mu_+ - \mu_-$. Note also that $\int \sqrt{\exp(k(x,x)} \diff|\tilde{\mu}|(x) = |\tilde{\mu}|(X) < \infty$.
Since all summands on the right hand side are non-negative, $\int\int \exp(-\rho(x,y))\diff \mu(x)\diff\mu(y) = 0$ implies that $\tilde{\mu}(X) = 0$, and
\[
	0 = \int \int k(x,y) \diff\tilde\mu(x)\diff\tilde\mu(y)
	= - \int \rho(x,y) \diff\tilde{\mu}(x)\diff\tilde{\mu}(y) .
\]
Since $(X,\rho)$ is of strong negative type, we find that $\tilde \mu = 0$ which allows to conclude that $\mu = 0$.
\end{proof}

\subsection{Kernels on $L^p$-spaces}\label{sec:Lp}

We consider now the special case of separable $L^p$-spaces, $1 < p < \infty$. In this case, we can derive explicit expressions for integrally strictly positive definite kernels using Theorems \ref{thm:Banachkernel} and \ref{thm:Laplacekernel}. For the first construction, the crucial ingredient is that characterizations of covariance operators on the dual are available in this case.

The kernels presented in this section may find application in the construction of new kernel two-sample tests for functional data, see \citet{WynneDuncan2020} for exisiting work in this direction. They can also be used to construct kernel scores for probabilistic predictions of outcomes taking functional values such as precipitation fields over spatial domains.

Let $(X,\mathcal{A})$ be a measurable space and and $\lambda$ a finite or $\sigma$-finite measure on $(X,\mathcal{A})$ such that $L^p(\lambda)$ is separable. Let $1 < q < \infty$ such that $1/p + 1/q = 1$. Then, also $L^q(\lambda)$ is separable. If $A \in L(L^p(\lambda),L^q(\lambda))$ is the covariance of a centered Gaussian measure on $L^q(\lambda)$, then by \citet[Proposition 3.11.15]{Bogachev1998}, there exists a measurable kernel $k_1$ on $X$ with 
\begin{equation}\label{eq:kernelqint}
\int_X k_1(x,x)^{\frac{q}{2}}\diff\lambda(x) < \infty
\end{equation}
and
\begin{equation}\label{eq:covopk1}
(Af)(x) = \int_X k_1(x,y) f(y) \diff \lambda(y), \quad f \in L^p(\lambda),\; x \in X.
\end{equation}
Conversely, any measurable kernel $k_1$ on $X$ that satisfies \eqref{eq:kernelqint} induces a covariance of some centered Gaussian measure via \eqref{eq:covopk1}.
This leads to the following corollary to Theorem \ref{thm:Banachkernel}.
\begin{corollary}\label{cor:Lpkernel}
Let $1 < p < \infty$. Suppose that $L^p(\lambda)$ is separable and that $k_1$ is a measurable kernel on $X$ that satisfies \eqref{eq:kernelqint}, and 
\begin{equation}\label{eq:Lpintpd}
\int_X \int_X k_1(x,y) g(x)g(y)\diff \lambda(y)\diff\lambda(x) > 0 \quad \text{for all $g \in L^p(\lambda)\backslash \{0\}$.}
\end{equation}
Let $\varphi \in \Phi_\infty^+$. Then, 
\begin{equation*}\label{eq:Lpkernel}
k_2(f,g) := \varphi\Big(\int_X\int_X k_1(x,y)(f(x)-g(x))(f(y)-g(y))\diff \lambda(y)\diff\lambda(x) \Big), \quad f,g \in L^p(\lambda),
\end{equation*}
is a kernel on $L^p(\lambda)$ that is integrally strictly positive definite with respect to $\mathcal{M}(L^p(\lambda))$.
\end{corollary}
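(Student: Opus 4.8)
The plan is to recognize $k_2$ as a special case of the kernel of Theorem~\ref{thm:Banachkernel} applied to $B = L^p(\lambda)$, and then to upgrade the conclusion from $\mathcal{M}^*$ to all finite signed measures. Since $1 < p < \infty$, the space $L^p(\lambda)$ is reflexive, so under the canonical identifications $B' = L^q(\lambda)$ and $B'' = L^p(\lambda)$, and the integral operator $A$ defined by \eqref{eq:covopk1} lies in $L(B'',B') = L(L^p(\lambda),L^q(\lambda))$. For $f,g \in L^p(\lambda)$, the duality pairing unwinds via \eqref{eq:covopk1} and Fubini to $\langle A(f-g),f-g\rangle = \int_X\int_X k_1(x,y)(f(x)-g(x))(f(y)-g(y))\diff\lambda(y)\diff\lambda(x)$, which is exactly the argument of $\varphi$ in the definition of $k_2$. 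Hence $k_2(f,g) = \varphi(\langle A(f-g),f-g\rangle)$, and everything reduces to checking that $A$ meets the hypothesis of Theorem~\ref{thm:Banachkernel}: it must be the covariance of a non-degenerate centered Radon Gaussian measure on $L^q(\lambda)$.

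That $A$ is the covariance operator of \emph{some} centered Gaussian measure $\mathcal{W}$ on $L^q(\lambda)$ (and, in particular, a well-defined bounded operator $L^p(\lambda)\to L^q(\lambda)$) is the converse direction of \citet[Proposition 3.11.15]{Bogachev1998}, which applies precisely because $k_1$ satisfies the integrability bound \eqref{eq:kernelqint}. For non-degeneracy I would argue through injectivity: by \eqref{eq:covopk1} the quantity in \eqref{eq:Lpintpd} equals $\langle Ag,g\rangle$, so \eqref{eq:Lpintpd} says $\langle Ag,g\rangle > 0$ for every $g \in L^p(\lambda)\setminus\{0\}$; since $A \ge 0$, this forces $\ker A = \{0\}$. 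A centered Gaussian measure whose covariance operator is injective is non-degenerate, and hence (as used in the proof of Theorem~\ref{thm:Banachkernel}) has full support. Thus the hypotheses of Theorem~\ref{thm:Banachkernel} hold, and that theorem gives that $k_2 = k$ is a kernel on $L^p(\lambda)$ that is integrally strictly positive definite with respect to $\mathcal{M}^*(L^p(\lambda))$.

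Finally, since $L^p(\lambda)$ is a separable Banach space it is Polish, so by Ulam's theorem (\citet[Lemma 26.2]{Bauer2001}, recalled in Section~\ref{sec:prelim}) every finite signed Borel measure on it is Radon; that is, $\mathcal{M}^*(L^p(\lambda)) = \mathcal{M}(L^p(\lambda))$, which upgrades the conclusion to the stated one. I expect the non-degeneracy step to be the main point requiring care: one has to pass from the analytic positivity condition \eqref{eq:Lpintpd} to the measure-theoretic statement that $\mathcal{W}$ is non-degenerate, which rests on the existence statement in \citet{Bogachev1998} together with the standard fact that injectivity of the covariance operator is equivalent to non-degeneracy (equivalently, full support) of a centered Gaussian measure. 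The remaining steps — reflexivity identifications, rewriting the pairing as the double integral, and invoking Ulam's theorem — are routine.
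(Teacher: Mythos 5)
Your proposal is correct and follows the same overall route as the paper: identify $k_2$ with the kernel of Theorem~\ref{thm:Banachkernel} for $B=L^p(\lambda)$, $B'=L^q(\lambda)$, $B''=L^p(\lambda)$ by reflexivity, use the converse direction of \citet[Proposition 3.11.15]{Bogachev1998} together with \eqref{eq:kernelqint} to realize the integral operator $A$ of \eqref{eq:covopk1} as a Gaussian covariance on $L^q(\lambda)$, and reduce everything to verifying non-degeneracy from \eqref{eq:Lpintpd}. The one place where you diverge is that verification: you pass from $\langle Ag,g\rangle>0$ for $g\neq 0$ to injectivity of $A$ and then invoke the ``standard fact'' that an injective covariance operator of a centered Gaussian measure forces non-degeneracy (full support), whereas the paper makes exactly this fact explicit by a chain of references: non-degeneracy is equivalent to density of the Cameron--Martin space, which contains the range of $A$ \citep[Theorem 3.2.3]{Bogachev1998}, and the range of $A$ is dense if and only if the null space of the (self-adjoint) operator $A$ is trivial \citep[Satz III.4.5]{Werner2002}, which follows from \eqref{eq:Lpintpd}. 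So your appeal to the standard fact is legitimate but is precisely the content the paper's proof supplies; if you wanted it self-contained, note that for a covariance form the Cauchy--Schwarz inequality gives $\langle Ag,g\rangle=0 \Leftrightarrow Ag=0$, and then run the Cameron--Martin/annihilator argument. A small bonus of your write-up is that you explicitly upgrade from $\mathcal{M}^*(L^p(\lambda))$ to $\mathcal{M}(L^p(\lambda))$ via Ulam's theorem (separable Banach, hence Polish), a step the paper leaves implicit.
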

\begin{proof}
We need to show that $k_1$ induces a \emph{nondegenerate} centered Gaussian measure on $L^q(\lambda)$. A centered Gaussian measure $\tau$ on $L^q(\lambda)$ is nondegenerate if and only if its Cameron-Martin $H(\tau)$ space is dense in $L^q(\lambda)$. The space $H(\tau)$ contains the image of the operator $A$ defined at \eqref{eq:covopk1} \citep[Theorem 3.2.3]{Bogachev1998}. By \citet[Satz III.4.5]{Werner2002}, the image of $A$ is dense in $L^q(\lambda)$ if and only if the null space of the adjoint operator $A'=A\in L(L^p(\lambda),L^q(\lambda))$ is trivial. This in turn is equivalent to the following condition for any $g \in L^p(\lambda)$:
\[
\int_X \int_X k_1(x,y) g(x) f(y)\diff \lambda(y)\diff\lambda(x) =0 \quad \text{for all $f \in L^p(\lambda)$} \quad \implies \quad g = 0.
\]
This condition is implied by \eqref{eq:Lpintpd}.
\end{proof}

The condition at \eqref{eq:Lpintpd} is generally different to the kernel $k_1$ being integrally strictly positive definite with respect to $\mathcal{M}(X)$. This is easily seen by choosing $\lambda$ to be a measure with finite support. \citet{SriperumbudurGrettonETAL2010} consider the definition of an integrally strictly positive definite kernel a generalization of the condition $\int_{\R^d}\int_{\R^d}k_1(x,y)g(x)g(y)\diff x \diff y > 0$, $g \in L^2(\R^d)$ given by \citet[Section 6]{Stewart1976}. The cases $p \in \{1,\infty\}$ are not included in Corollary \ref{cor:Lpkernel} since $L^p$ is not reflexive in this case.


Interestingly, Corollary~\ref{cor:Lpkernel} also comes as an echo to \citet[Theorem 4]{WynneDuncan2020} and its generalization given by Proposition~\ref{prop:genT4_WandD}, by taking for $\atticY$ the RKHS $H_1$ associated with $k_1$ and for $T$ the operator $A$ defined by \eqref{eq:covopk1} seen as a mappping from $L^p(\lambda)$ to $H_1$. 
The condition at \eqref{eq:Lpintpd} then ensures that the latter operator is injective.   

The norm of separable $L^p$ spaces for $1 \le p \le 2$ is of negative type and even of strong negative type for $1 < p \le 2$ \citep{Linde1986b,Lyons2013}. Therefore, we obtain the following corollary to Theorem \ref{thm:Laplacekernel}. 

\begin{corollary}\label{cor:LpLaplace}
Suppose that $L^p(\lambda)$ is separable and $1 < p \le 2$. Let $\varphi \in \Phi_\infty^+$. Then, 
\[
k(x,y):= \varphi(\|f-g\|_{L^p}), \quad f,g \in L^{p}(\lambda),
\]
is a kernel on $L^{p}(\lambda)$ that is characteristic, that is, strictly integrally positive definite with respect to $\mathcal{M}_0(L^{p}(\lambda))$.
\end{corollary}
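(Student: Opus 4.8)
The plan is to obtain Corollary~\ref{cor:LpLaplace} as an immediate application of Theorem~\ref{thm:Laplacekernel}, once the space in question has been identified as a metric space of strong negative type. Concretely, I would set $\rho(f,g) := \|f-g\|_{L^p}$, which is a genuine metric on $L^p(\lambda)$ because $\|\cdot\|_{L^p}$ is a norm for $p \ge 1$. The only nontrivial ingredient is that $(L^p(\lambda),\rho)$ is of strong negative type for $1 < p \le 2$; this is precisely the fact recalled in the paragraph preceding the corollary, due to \citet{Linde1986b} (negative type of $L^p$ for $1 \le p \le 2$) and \citet{Lyons2013} (strong negative type for the narrower range $1 < p \le 2$), and I would simply invoke it rather than reprove it. The restriction $p \le 2$ is essential already at the level of negative type, whereas the exclusion of the endpoint $p = 1$ is exactly what separates strong negative type from plain negative type.

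Given this, Theorem~\ref{thm:Laplacekernel} applied to the metric space $(L^p(\lambda),\rho)$ and the fixed $\varphi \in \Phi_\infty^+$ states verbatim that $k(f,g) = \varphi(\|f-g\|_{L^p})$ is a kernel on $L^p(\lambda)$ that is characteristic, i.e.\ strictly integrally positive definite with respect to $\mathcal{M}_0(L^p(\lambda))$. I would add one sentence noting that $\varphi$, being the Laplace transform of a finite measure, is bounded by $\varphi(0)$, so $k$ is a bounded kernel and $\mathcal{M}^k(L^p(\lambda)) = \mathcal{M}(L^p(\lambda)) \supseteq \mathcal{M}_0(L^p(\lambda))$; this makes explicit that the conclusion of Theorem~\ref{thm:Laplacekernel} is non-vacuous here, but no further argument is needed.

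The main obstacle, such as it is, is purely a matter of matching conventions: one must check that the definition of strong negative type employed in \citet{Lyons2013} — quantifying over measures $\mu \in \mathcal{M}_0(X) \cap \mathcal{M}_\rho(X)$ — agrees with the one used in the excerpt, and that the separability hypothesis on $L^p(\lambda)$ in the statement is exactly the assumption under which those references establish (strong) negative type. Once this bookkeeping is dispatched, the proof reduces to a single citation of Theorem~\ref{thm:Laplacekernel}.
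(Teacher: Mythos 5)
Your proposal is correct and coincides with the paper's own (implicit) argument: the paper derives Corollary~\ref{cor:LpLaplace} exactly by citing \citet{Linde1986b} and \citet{Lyons2013} for the strong negative type of the $L^p$ norm when $1<p\le 2$ and then applying Theorem~\ref{thm:Laplacekernel}. Your additional remarks on boundedness of $\varphi$ and on matching the definition of strong negative type are harmless bookkeeping and do not change the route.
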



\section{Kernels on measures}\label{sec:measures}

In this section, we use Corollary \ref{cor:Hilbertkernel} to construct integrally strictly positive definite kernels on spaces of measures over locally compact Hausdorff spaces. Given a locally compact Hausdorff space $(X,\tau)$ and a characteristic kernel $k_1$ on $X$, we can injectively embed $\mathcal{M}_1(X)$ in the Hilbert space $H_{k_1}$ via the kernel mean embedding $\Phi_{k_1}$. Now, we can apply Corollary \ref{cor:Hilbertkernel} to the image of $\mathcal{M}_1(X)$ under $\Phi_{k_1}$. The essential idea of this construction has appeared previously in the literature, see for example \citet[Example 1]{ChristmannSteinwart2010}; \citet{BuathongGinsbourgerETAL2020}; \citet{HamidSchulzeETAL2021}. To the best of our knowledge, \citet{ChristmannSteinwart2010} are the only ones to consider universality of the resulting kernels, and not only (strict) positive definiteness. In contrast to their result, the following theorem does not require compactness of $X$. Below, we discuss the relation of Theorem \ref{thm:deepkernelM1} in the special case of compact $X$ to the result of \citet{ChristmannSteinwart2010}.

\begin{theorem}\label{thm:deepkernelM1}
Let $(X,\tau)$ be a locally compact Hausdorff space, and let $k_1$ a bounded continuous kernel on $X$ that is integrally strictly positive definite with respect to $\mathcal{M}(X)$ and with separable RKHS $H_{k_1}$. Then, for any $\varphi \in \Phi_\infty^+$, the kernel
\begin{equation*}\label{eq:k2}
k_2(\mu,\nu):= \varphi\Big(\int_X\int_X k_1(x,x')\diff(\mu-\nu)(x)\diff(\mu-\nu)(x')\Big), \quad \mu,\nu \in \mathcal{M}_+(X),
\end{equation*}
is integrally strictly positive definite with respect to $\mathcal{M}(\mathcal{M}_1(X))$, where $\mathcal{M}_1(X)$ is equipped with the topology of weak convergence of probability measures. 

If $X$ is additionally Polish, then $k_2$ is integrally strictly positive definite with respect to $\mathcal{M}(\mathcal{M}_+(X))$, where $\mathcal{M}_+(X)$ is equipped with the topology of weak convergence of measures  \citep[Definition 30.7]{Bauer2001}.
\end{theorem}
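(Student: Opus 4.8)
The plan is to recognise $k_2$ as the pull-back, through the kernel mean embedding $\Phi_{k_1}$, of a radial kernel on the separable Hilbert space $H_{k_1}$, and then to transfer integral strict positive definiteness from $H_{k_1}$ back to the space of measures by a change of variables. Concretely, since $k_1$ is bounded we have $\mathcal{M}^{k_1}(X)=\mathcal{M}(X)$, so $\Phi_{k_1}$ is defined and linear on all of $\mathcal{M}(X)$, and for $\mu,\nu\in\mathcal{M}_+(X)$
\[
\int_X\int_X k_1(x,x')\diff(\mu-\nu)(x)\diff(\mu-\nu)(x') = \bigl\|\Phi_{k_1}(\mu)-\Phi_{k_1}(\nu)\bigr\|_{H_{k_1}}^2 .
\]
Hence $k_2(\mu,\nu)=k_3\bigl(\Phi_{k_1}(\mu),\Phi_{k_1}(\nu)\bigr)$ with $k_3(u,v):=\varphi\bigl(\|u-v\|_{H_{k_1}}^2\bigr)$; in particular $k_2$ is a bounded kernel (a pull-back of the kernel $k_3$; boundedness because $\varphi$ is bounded), and by Corollary~\ref{cor:Hilbertkernel} the kernel $k_3$ is integrally strictly positive definite with respect to $\mathcal{M}(H_{k_1})$.

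Next I would check that $\Phi_{k_1}$ is a continuous map from $\mathcal{M}_1(X)$, equipped with the topology of weak convergence, into $H_{k_1}$: every $w\in H_{k_1}$ is a bounded continuous function on $X$, and $\langle w,\Phi_{k_1}(\mu)\rangle_{H_{k_1}}=\int_X w\diff\mu$, so $\mu\mapsto\Phi_{k_1}(\mu)$ is weakly continuous and, $H_{k_1}$ being separable, Borel measurable; consequently $k_2$ is a continuous kernel on $\mathcal{M}_1(X)$. For $\Lambda\in\mathcal{M}(\mathcal{M}_1(X))$ the push-forward $\Phi_{k_1}\#\Lambda$ is then a well-defined element of $\mathcal{M}(H_{k_1})$, and a change of variables yields
\[
\int\!\!\int k_2(\mu,\nu)\diff\Lambda(\mu)\diff\Lambda(\nu)=\int\!\!\int k_3(u,v)\diff(\Phi_{k_1}\#\Lambda)(u)\diff(\Phi_{k_1}\#\Lambda)(v)=\bigl\|\Phi_{k_1}\#\Lambda\bigr\|_{H_{k_3}}^2 .
\]
By Corollary~\ref{cor:Hilbertkernel} this quantity is strictly positive unless $\Phi_{k_1}\#\Lambda=0$, so the whole theorem comes down to the implication ``$\Phi_{k_1}\#\Lambda=0\ \Rightarrow\ \Lambda=0$''.

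This implication is the crux, and I expect it to be the main obstacle. Since $k_1$ is integrally strictly positive definite with respect to $\mathcal{M}(X)$ it is in particular characteristic, so $\Phi_{k_1}$ is injective on $\mathcal{M}_1(X)$ (indeed on all of $\mathcal{M}(X)$). Injectivity by itself does not suffice: I also need $\Phi_{k_1}$ to be \emph{bi-measurable}, i.e.\ to send Borel subsets of $\mathcal{M}_1(X)$ to Borel subsets of $H_{k_1}$, for then every Borel $A\subseteq\mathcal{M}_1(X)$ satisfies $\Lambda(A)=\Lambda\bigl(\Phi_{k_1}^{-1}(\Phi_{k_1}(A))\bigr)=(\Phi_{k_1}\#\Lambda)(\Phi_{k_1}(A))=0$, whence $\Lambda=0$. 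Bi-measurability of an injective Borel map is guaranteed by Purves' theorem (equivalently by the Lusin--Souslin theorem, or by Blackwell's theorem applied to the countably generated, point-separating $\sigma$-algebra generated by $\Phi_{k_1}$) \emph{provided} $\mathcal{M}_1(X)$, with the topology of weak convergence, is a standard Borel space; establishing this — which I would do by arguing that $X$ may be taken second countable, hence Polish, under the present hypotheses, so that $\mathcal{M}_1(X)$ is itself Polish — is the delicate point, everything else being routine.

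Finally, the second assertion follows by the same scheme. When $X$ is Polish, $\mathcal{M}_+(X)$ with the topology of weak convergence of finite measures is Polish as well; $\Phi_{k_1}\colon\mathcal{M}_+(X)\to H_{k_1}$ is continuous because $\mu\mapsto\int_X w\diff\mu$ is continuous for that topology for each $w\in H_{k_1}\subseteq C_b(X)$, and it is injective because $\Phi_{k_1}$ is injective on $\mathcal{M}(X)\supseteq\mathcal{M}_+(X)$. One then repeats the push-forward computation with $\mathcal{M}_+(X)$ in place of $\mathcal{M}_1(X)$ and invokes Purves' theorem, now on the standard Borel space $\mathcal{M}_+(X)$, to pass from $\Phi_{k_1}\#\Lambda=0$ to $\Lambda=0$.
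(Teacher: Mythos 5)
Your overall architecture coincides with the paper's: write $k_2$ as the pullback through $\Phi_{k_1}$ of the radial kernel $\varphi(\|\cdot-\cdot\|^2_{H_{k_1}})$, invoke Corollary~\ref{cor:Hilbertkernel} on the separable Hilbert space $H_{k_1}$, push $\Lambda$ forward, and reduce everything to the implication $\Phi_{k_1}\#\Lambda=0\Rightarrow\Lambda=0$; your treatment of the second (Polish) part — Polishness of $\mathcal{M}_+(X)$, injectivity and measurability of $\Phi_{k_1}$, Purves' theorem — is exactly the paper's argument. The genuine gap is in the first part. You propose to obtain a standard Borel structure on $\mathcal{M}_1(X)$ by ``arguing that $X$ may be taken second countable, hence Polish, under the present hypotheses,'' but you do not establish this, and it does not follow from the hypotheses: boundedness, continuity, integral strict positive definiteness and separability of $H_{k_1}$ only yield a continuous injection of $X$ into a separable Hilbert space (the feature map), and continuous injections do not transfer second countability or metrizability back to $X$. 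For instance, pulling a Gaussian kernel back through an injection of an uncountable discrete space into $\ell^2$ satisfies all the hypotheses (every finite signed Borel measure on such a space being purely atomic, barring real-valued measurable cardinals), yet $X$ is not second countable and $\mathcal{M}_1(X)$ with the weak topology is not Polish. Only in the compact case does the paper note, via \citet{SteinwartZiegel2021}, that $X$ is forced to be Polish. So as written, the crux of your first part rests on an unproved (and in general false) reduction.

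The paper closes exactly this hole by a different device, which you could adopt without changing anything else in your proof: by \citet[Theorem~7]{Simon-GabrielBarpETAL2020}, for a bounded continuous kernel that is integrally strictly positive definite with respect to $\mathcal{M}(X)$ on a locally compact Hausdorff space, the MMD $\|\cdot\|_{H_{k_1}}$ metrizes weak convergence on $\mathcal{M}_1(X)$. Hence $\Phi_{k_1}$ restricted to $\mathcal{M}_1(X)$ is a homeomorphism onto its image, so every open, and therefore every Borel, subset of $\mathcal{M}_1(X)$ has the form $\Phi_{k_1}^{-1}(B)$ with $B\in\mathcal{B}(H_{k_1})$, i.e.\ $\mathcal{B}(\mathcal{M}_1(X))=\{\Phi_{k_1}^{-1}(B)\;|\;B\in\mathcal{B}(H_{k_1})\}$. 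Then $\Phi_{k_1}\#\Xi=0$ immediately gives $\Xi(\Phi_{k_1}^{-1}(B))=(\Phi_{k_1}\#\Xi)(B)=0$ for all such $B$, hence $\Xi=0$ — no bi-measurability of $\Phi_{k_1}$, no Purves/Lusin--Souslin, and no standard Borel structure on $\mathcal{M}_1(X)$ is needed for this part; those tools are only required (and available) in the second part, where $X$ is assumed Polish.
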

\begin{proof}
The kernel $k_2$ is the composition of $\tilde{k}_2$ with the kernel mean embedding $\Phi_{k_1}\colon\mathcal{M}_+(X)\to H_{k_1}$, where 
\[
\tilde{k}_2(f,f') := \varphi\big(\|f-f'\|^2_{H_{k_1}}\big), \quad f,f' \in H_{k_1}.
\]
Since $H_{k_1}$ is assumed to be separable, Corollary \ref{cor:Hilbertkernel} shows that $\tilde{k}_2$ is a kernel on $H_{k_1}$ that is integrally strictly positive definite with respect to $\mathcal{M}(H_{k_1})$. This immediately implies that $k_2$ is a kernel on $\mathcal{M}_+(X)$. 

Let $\Xi \in \mathcal{M}(\mathcal{M}_+(X))$. By \citet[Lemma 10]{Simon-GabrielScholkopf2018}, the kernel mean embedding $\Phi_{k_1}:\mathcal{M}_+(X) \to H_{k_1}$ is continuous with respect to weak convergence and thus measurable. Therefore, we can consider the image measure $\Phi_{k_1}\#\Xi \in \mathcal{M}(H_{k_1})$. We obtain that
\begin{multline*}
\int_{\mathcal{M}_+(X)}\int_{\mathcal{M}_+(X)} \varphi\big(\|\Phi_{k_1}(\mu)-\Phi_{k_1}(\mu')\|^2_{H_{k_1}})\diff \Xi(\mu)\diff \Xi(\mu')  \\
= \int_{H_{k_1}}\int_{H_{k_1}} \varphi\big(\|f-f'\|^2_{H_{k_1}})\diff (\Phi_{k_1}\#\Xi)(f)\diff (\Phi_{k_1}\#\Xi)(f').
\end{multline*}
If the above expression vanishes, then we obtain that $\Phi_{k_1}\#\Xi = 0$ since $\tilde{k}_2$ is integrally strictly positive definite with respect to $\mathcal{M}(H_{k_1})$.

By \citet[Theorem 7]{Simon-GabrielBarpETAL2020}, $\|\cdot\|_{H_{k_1}}$ defined at \eqref{eq:intspd} metrizes weak convergence in $\mathcal{M}_1(X)$, and hence,
$\mathcal{B}(\mathcal{M}_1(X)) = \{\Phi_{k_1}^{-1}(B)\;|\; B \in \mathcal{B}(H_{k_1})\}$. Therefore, $\Xi = 0$, which yields the first part of the theorem. 

Suppose now that $X$ is additionally Polish. Then, $\mathcal{M}_+(X)$ is Polish. Therefore, using that $\Phi_{k_1}$ is measurable and injective, for any Borel set $B \subseteq \mathcal{M}_+(X)$, also $\Phi_{k_1}(B)\subseteq H_{k_1}$ is a Borel set by Purves' theorem \citep{Purves1966}. This implies that $\mathcal{B}(\mathcal{M}_+(X)) = \{\Phi_{k_1}^{-1}(B)\;|\; B \in \mathcal{B}(H_{k_1})\}$. Therefore, $\Xi = 0$, which concludes the proof.
\end{proof}

The proof of the second part of Theorem~\ref{thm:deepkernelM1} actually shows the following, more general statement by setting  $\mathcal{X} = \mathcal{M}_+(X)$, $H = H_{k_1}$ and $T = \Phi_{k_1}$.
\begin{proposition}\label{prop:genT4_WandD}
Let $\mathcal{X}$ be a Polish space, $H$ a separable Hilbert space, $T$ a measurable and injective mapping from $\mathcal{X}$ to $H$, and $\varphi \in \Phi_\infty^+$. 
	Then, the kernel $k$ on $\mathcal{X}$ defined by  
	\begin{equation*}\label{eq:T-Hilbertkernel}
	k(x,x') := \varphi\big(\|T(x)-T(x')\|^2\big), \quad x,x' \in \mathcal{X},
	\end{equation*}
	is integrally strictly positive definite with respect to $\mathcal{M}(\mathcal{X})$. 
\end{proposition}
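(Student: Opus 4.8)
The plan is to realize $k$ as the pullback along $T$ of the radial Hilbert-space kernel $\tilde k(f,f') := \varphi(\|f-f'\|^2)$ on $H$, and then to transport the integral strict positive definiteness of $\tilde k$ back to $\mathcal X$ by a change of variables. Since $\varphi\in\Phi_\infty^+$ is the Laplace transform of a finite measure it is bounded, so both $\tilde k$ and $k(x,x') = \tilde k(T(x),T(x'))$ are bounded measurable kernels (the latter because positive definiteness is preserved under pre-composition with any map); in particular $\mathcal M^k(\mathcal X) = \mathcal M(\mathcal X)$ and $\mathcal M^{\tilde k}(H) = \mathcal M(H)$, so no integrability issues arise. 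Because $H$ is separable, Corollary~\ref{cor:Hilbertkernel} tells us that $\tilde k$ is integrally strictly positive definite with respect to $\mathcal M(H)$; this is the input we will exploit.

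First I would fix $\mu\in\mathcal M(\mathcal X)$ and push it forward along $T$. Since $T$ is measurable, $T\#\mu\in\mathcal M(H)$, and writing $\mu = \mu_+-\mu_-$ and applying the image-measure formula to each of the four finite non-negative product measures $\mu_\pm\otimes\mu_\pm$ before recombining by linearity gives
\[
\|\mu\|_{H_k}^2 = \int_{\mathcal X}\int_{\mathcal X}\tilde k\big(T(x),T(x')\big)\diff\mu(x)\diff\mu(x') = \int_{H}\int_{H}\tilde k(f,f')\diff(T\#\mu)(f)\diff(T\#\mu)(f') = \|T\#\mu\|_{H_{\tilde k}}^2.
\]
Hence if $\|\mu\|_{H_k}^2 = 0$, then $\|T\#\mu\|_{H_{\tilde k}}^2 = 0$, and Corollary~\ref{cor:Hilbertkernel} forces $T\#\mu = 0$.

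The remaining, and genuinely essential, step is to deduce $\mu = 0$ from $T\#\mu = 0$, and this is where the Polish hypothesis enters --- exactly as in the second part of the proof of Theorem~\ref{thm:deepkernelM1}. A separable Hilbert space is Polish, so $T$ is a measurable injection between standard Borel spaces, and by Purves' theorem \citep{Purves1966} it maps Borel sets to Borel sets. Thus for every $B\in\mathcal B(\mathcal X)$ the set $T(B)$ is Borel in $H$, and injectivity gives $T^{-1}(T(B)) = B$, so $\mu(B) = (T\#\mu)(T(B)) = 0$; since $B$ is arbitrary, $\mu = 0$. I do not expect any real obstacle here: the only points requiring care --- that $T\#\mu$ is a bona fide finite signed Borel measure on $H$, and that the change of variables is carried out on non-negative measures and then recombined --- are routine, and the one nontrivial ingredient, Purves' theorem, is already invoked elsewhere in the paper, so this proposition is essentially an extraction of the argument underlying Theorem~\ref{thm:deepkernelM1}.
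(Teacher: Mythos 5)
Your proposal is correct and follows essentially the same route as the paper, which obtains Proposition~\ref{prop:genT4_WandD} precisely by extracting the argument from the second part of the proof of Theorem~\ref{thm:deepkernelM1}: push $\mu$ forward along $T$, apply Corollary~\ref{cor:Hilbertkernel} to the radial kernel on $H$ to get $T\#\mu=0$, and then use Purves' theorem together with injectivity of $T$ between Polish (standard Borel) spaces to conclude $\mu=0$. The auxiliary points you flag (boundedness of $\varphi$, the change-of-variables step on the positive and negative parts) are handled correctly and match the paper's implicit reasoning.
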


If there exists a finite Borel measure $\nu$ with full support on a Hausdorff space $(X,\tau)$, then \citet[Corollary 3.6]{SteinwartScovel2012} shows that any bounded and separately continuous kernel $k$ has a separable RKHS $H_k$. If the Hausdorff space in Theorem \ref{thm:deepkernelM1} is compact, then it is sufficient to require $k_1$ to be bounded, continuous and characteristic. In the proof one can then employ \citet[Theorem 5]{Simon-GabrielBarpETAL2020}. For the second part it suffices to note that by \citet[Corollary 3.14]{SteinwartZiegel2021}, $(X,\tau)$ is necessarily Polish.

\citet[Example 2]{ChristmannSteinwart2010} provide a second approach to constructing universal kernels on probability measures over compact subsets of $\mathbb{R}^d$ via comparing Fourier transforms (or characteristic functions). Our results allow for a similar construction on $\mathbb{R}^d$. For a finite measure $\mu \in \mathcal{M}_+(\mathbb{R}^d)$, let $\hat\mu(s) = \int_{\mathbb{R}^d} \exp(i \langle x,s\rangle) \diff \mu(x)$, $s \in \mathbb{R}^d$, denote its Fourier transform. For a complex number $z \in \mathbb{C}$, we denote by $\bar{z}$ its complex conjugate.
\begin{proposition}\label{prop:Fourierkernel}
Let $\lambda$ be a probability measure on $\mathbb{R}^d$ with full support, and $\varphi \in \Phi_\infty^+$. Then, 
\begin{equation*}\label{eq:Fourierkernel}
k(\mu,\nu):= \varphi\big(\|\hat{\mu}-\hat{\nu}\|^2_{L^2}\big), \quad \mu, \nu \in \mathcal{M}_1(\mathbb{R}^d),
\end{equation*}
is an integrally strictly positive definite kernel with respect to $\mathcal{M}(\mathcal{M}_1(\mathbb{R}^d))$, where $\mathcal{M}_1(\mathbb{R}^d)$ is equipped with the topology of weak convergence of probability measures.
\end{proposition}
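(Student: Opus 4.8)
The plan is to recognize $k$ as a kernel of exactly the type treated in Proposition~\ref{prop:genT4_WandD}, with the Fourier transform serving as the injective feature map. Here $\|\cdot\|_{L^2}$ is understood as the norm of $L^2(\lambda)$, and we regard the complex space $L^2_{\mathbb C}(\lambda)$ as a real Hilbert space $H$ by taking the real part of its inner product; the induced norm is unchanged, and $H$ is separable since $\mathbb{R}^d$ is second countable and $\lambda$ is a finite measure. Because $|\hat\mu(s)| \le 1$ for all $s \in \mathbb{R}^d$ and all $\mu \in \mathcal{M}_1(\mathbb{R}^d)$, the map $T\colon \mathcal{M}_1(\mathbb{R}^d) \to H$, $T\mu := \hat\mu$, is well defined and satisfies $k(\mu,\nu) = \varphi(\|T\mu - T\nu\|_H^2)$; it therefore suffices to verify the hypotheses of Proposition~\ref{prop:genT4_WandD} with $\mathcal{X} = \mathcal{M}_1(\mathbb{R}^d)$ and this $T$.

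First I would check that $T$ is continuous when $\mathcal{M}_1(\mathbb{R}^d)$ carries the topology of weak convergence: if $\mu_n \to \mu$ weakly then $\hat\mu_n(s) \to \hat\mu(s)$ pointwise, since $x \mapsto e^{i\langle x,s\rangle}$ is bounded and continuous, and as $|\hat\mu_n - \hat\mu|^2 \le 4$ with $\lambda$ finite, dominated convergence gives $\|T\mu_n - T\mu\|_{L^2}^2 = \int_{\mathbb{R}^d} |\hat\mu_n - \hat\mu|^2 \diff\lambda \to 0$; continuity entails Borel measurability. The key step is injectivity of $T$, and it is here that the hypothesis that $\lambda$ has full support enters: if $\hat\mu = \hat\nu$ holds $\lambda$-almost everywhere, then, both functions being continuous and $\lambda$ having full support, $\hat\mu = \hat\nu$ everywhere, whence $\mu = \nu$ by the uniqueness theorem for characteristic functions. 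Finally, since $\mathbb{R}^d$ is Polish, $\mathcal{M}_1(\mathbb{R}^d)$ equipped with the weak topology is Polish, so Proposition~\ref{prop:genT4_WandD} applies and yields that $k$ is integrally strictly positive definite with respect to $\mathcal{M}(\mathcal{M}_1(\mathbb{R}^d))$.

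I do not expect a serious obstacle: the argument reduces entirely to the uniqueness theorem for Fourier transforms together with the general apparatus already in place (Proposition~\ref{prop:genT4_WandD}, itself a by-product of Corollary~\ref{cor:Hilbertkernel}). The only points needing a little care are bookkeeping ones — that $L^2_{\mathbb C}(\lambda)$ qualifies as a separable \emph{real} Hilbert space, so that a kernel built from a complex $L^2$-norm is covered by Proposition~\ref{prop:genT4_WandD}, and the observation that continuity of characteristic functions promotes $\lambda$-a.e.\ equality to equality everywhere, which is the sole place where full support of $\lambda$ is used. This mirrors the proof of Theorem~\ref{thm:deepkernelM1}, with the Fourier transform $\mu \mapsto \hat\mu$ playing the role of the kernel mean embedding $\Phi_{k_1}$.
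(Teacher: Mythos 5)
Your proof is correct, and its core is the same as the paper's: embed $\mathcal{M}_1(\mathbb{R}^d)$ into $L^2(\lambda)$ via $\mu \mapsto \hat\mu$ and exploit that radial $\Phi_\infty^+$ kernels on a separable Hilbert space are integrally strictly positive definite (Corollary~\ref{cor:Hilbertkernel}). Where you differ is in the final measure-theoretic step. The paper argues directly that two elements of $\mathcal{M}(\mathcal{M}_1(\mathbb{R}^d))$ coincide iff their images under $\Phi\colon\mu\mapsto\hat\mu$ coincide, using continuity of $\Phi$ together with the (asserted, not proved) claim that $L^2(\lambda)$-convergence of $(\hat\mu_n)$ implies weak convergence of $(\mu_n)$, so that the Borel $\sigma$-algebra on $\mathcal{M}_1(\mathbb{R}^d)$ is the pullback $\sigma$-algebra. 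You instead route everything through Proposition~\ref{prop:genT4_WandD}, which only needs measurability and injectivity of $T$ plus Polishness of $\mathcal{M}_1(\mathbb{R}^d)$, the $\sigma$-algebra issue being absorbed into Purves' theorem inside that proposition's proof. Your route buys two things: it avoids having to justify the ``$L^2$-convergence implies weak convergence'' claim (which would otherwise require a small L\'evy-continuity-type argument), and it makes explicit two points the paper glosses over, namely that $L^2_{\mathbb{C}}(\lambda)$ must be viewed as a separable real Hilbert space (with unchanged norm) to fall under Corollary~\ref{cor:Hilbertkernel}, and that the full-support hypothesis on $\lambda$ is exactly what upgrades $\lambda$-a.e.\ equality of the continuous functions $\hat\mu,\hat\nu$ to equality everywhere, hence injectivity. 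The paper's direct argument, on the other hand, does not rely on Polishness of the space of measures beyond what is needed for $\Phi$ to be well behaved, and it yields as a by-product the topological observation reused in Example~\ref{ex:transinv}.
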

\begin{proof}
The kernel 
\begin{equation}\label{eq:Fourierkernel2}
\tilde{k}(f,g):= \varphi\big(\|f-g\|^2_{L^2}\big), \quad  f,g \in L^2(\lambda)
\end{equation}
is integrally strictly positive definite with respect to $\mathcal{M}(L^2(\lambda))$ by Corollary \ref{cor:Hilbertkernel}. The map $\Phi$ from $\mathcal{M}_1(\mathbb{R}^d)$ to $L^2(\lambda)$ that maps each probability measure $\mu$ to its Fourier transform $\hat{\mu}$ is injective and continuous by standard properties of Fourier transforms. It is important here that $\lambda$ is a finite measure on $\mathbb{R}^d$. Furthermore, $L^2$-convergence of $(\hat{\mu}_n)_{n \in \mathbb{N}}$ implies weak convergence of $(\mu_n)_{n \in \mathbb{N}}$. Therefore, two measures in $\mathcal{M}(\mathcal{M}_1(\mathbb{R}^d))$ coincide if and only if their image measures under $\Phi$ coincide. 
\end{proof}

A further approach to constructing (strictly) positive definite kernels on absolutely continuous probability distributions on $\mathbb{R}^d$ with finite second moment is given in \citet[Proposition 1]{BachocSuvorikovaETAL2020}. 
They compose a positive definite kernel on $L^2(\R^d)$ with the inverse of the optimal transport maps of the considered probability measures to a reference distribution. The reference distribution is usually chosen as a Wasserstein barycenter. Investigating whether the resulting kernels are integrally strictly positive definite is not straightforward. However, Proposition~\ref{prop:genT4_WandD} can be leveraged to establish the integrally strictly positive definiteness of a class of covariance kernels on Wasserstein space building upon \citep{MerigotDelalandeETAL2020} as developed in the next example. 
\begin{example}[Some integrally strictly positive definite kernels on Wasserstein space]
Let $K,K' \subset \mathbb{R}^d$ be compact and convex with unit Lebesgue measure. For any $\mu \in \mathcal{M}_1(K)$, \citet[Definition 1.1]{MerigotDelalandeETAL2020} denote by $T_{\mu}$ the optimal transport map between the Lebesgue measure on $K'$ and $\mu$, and they call the map $\mu \in \mathcal{M}_{1}(K) \mapsto T_{\mu} \in L^2(K', \mathbb{R}^d)$ the \textit{Monge embedding}. \citet[Theorem 3.1]{MerigotDelalandeETAL2020} show that there exists a constant $C$ (depending on $d$, $K$ and $K'$) such that for all $\mu, \nu \in \mathcal{M}_1(K)$, the following holds: 
\begin{equation*}
\label{eq:merigot}
W_2(\mu,\nu)\leq \tilde{W}_{2}(\mu,\nu) \leq C W_2(\mu,\nu)^{\frac{2}{15}},
\end{equation*}   
where $W_2$ is the $2$-Wasserstein distance on $\mathcal{M}_1(K)$ and the distance $\tilde{W}_{2}$ on $\mathcal{M}_1(K)$ is defined by $\tilde{W}_{2}(\mu,\nu)=\|T_{\mu}-T_{\nu}\|_{L^2}$. This implies in particular that the Monge embedding is injective and continuous on $\mathcal{M}_1(K)$ endowed with the $2$-Wasserstein metric, which is a complete separable metric space \citep[Chapter 2]{PanaretosZemel2020}. Applying Proposition~\ref{prop:genT4_WandD}, we obtain that for any $\varphi \in \Phi_\infty^+$, the kernel 
defined by  
	\begin{equation*}
	k(\mu,\nu):= \varphi\big(\tilde{W}_2^2(\mu,\nu)\big), \quad \mu, \nu \in \mathcal{M}_1(K),
	\end{equation*}
	is integrally strictly positive definite with respect to $\mathcal{M}(\mathcal{M}_1(K))$, where $\mathcal{M}_1(K)$ is equipped with the $2$-Wasserstein topology, which coincides with the topology of weak convergence.
\end{example}

Kernels over measures have been used in the literature to learn kernels of Gaussian processes. When the focus is on radial or translation invariant kernels on $\R^d$, then kernels can be chosen via their spectral measure given by Bochner's theorem \citep[Theorem 6.6]{Wendland2005} which has been advocated for example by \citet{Kom-SamoRoberts2015,BentonMaddoxETAL2019,HamidSchulzeETAL2021}.  

\begin{example}[Translation invariant kernels]\label{ex:transinv}
Bochner's theorem states that a continuous positive definite function on $\mathbb{R}^d \times \mathbb{R}^d$ (with values in $\mathbb{C}$) that depends only on the difference $x-y$ of the arguments $(x,y) \in \mathbb{R}^d \times \mathbb{R}^d$ and is one if $x = y$ is necessarily of the form $\boldsymbol{\varphi}(x-y)$ with $\boldsymbol{\varphi}$ being the Fourier transform of some probability measure $\mu \in \mathcal{M}_1(\mathbb{R}^d)$. 
The arguments in the proof of Proposition \ref{prop:Fourierkernel} show that the set $\mathcal{F}_d$ of all such $\boldsymbol{\varphi}$ is a closed subset of $L^2(\lambda)$ for any probability measure $\lambda$ on $\mathbb{R}^d$ with full support. Therefore, the kernel $\tilde{k}$ given at \eqref{eq:Fourierkernel2} restricted to $\mathcal{F}_d$ is integrally strictly positive definite with respect to $\mathcal{M}(\mathcal{F}_d)$, where $\mathcal{F}_d$ is equipped with $L^2(\lambda)$-norm. Convergence in $L^2(\lambda)$ is equivalent to compact convergence on $\mathcal{F}_d$, and this is in turn equivalent to pointwise convergence. Therefore, the $\sigma$-algebras generated by these three topologies on $\mathcal{F}_d$ coincide, so we could equivalently consider $\mathcal{M}(\mathcal{F}_d)$, where $\mathcal{F}_d$ is equipped with the topology of pointwise convergence. 

\end{example}

\begin{example}[Radial kernels]

It is also possible to construct integrally strictly positive definite kernels on $\Phi_\infty$, that is, on radial positive definite kernels. The functions in $\Phi_\infty$ are all Laplace transforms of measures on $[0,\infty)$. Proceeding analogously to Example \ref{ex:transinv}, one can see $\Phi_\infty$ as a closed subset of $L^2(\lambda)$ for some probability measure $\lambda$ on $[0,\infty)$ with full support, and use \eqref{eq:Fourierkernel} to define integrally strictly positive definite kernels on $\Phi_\infty$. 


\end{example}




\section{Discussion}\label{sec:discussion}

Kernels have been found to be fruitful in a variety of domains, and the presented results on kernels on non-standard spaces are likely to be of interest to extend the applicability of kernel methods.
In particular, we enrich the collection of kernels on Hilbert and Banach spaces with classes enjoying proven integrally strictly positive definiteness, which may be relevant in functional two-sample testing \citep{GrettonBorgwardtETAL2012} and further endeavors in functional data analysis. 
Kernels on measures are of special interest for machine learning with set- and distributional data \citep{MuandetFukumizuETAL2012, Sutherland2016, SzaboSriperumbudurETAL2016, BuathongGinsbourgerETAL2020}, as well as in the evaluation of probabilistic forecasts \citep{SteinwartZiegel2021} with kernel scores associated with characteristic kernels providing a versatile source of strictly proper scoring rules. To give a concrete example of a new avenue of research, scoring rules for point processes are just in their infancy \citep{HeinrichSchneiderETAL2019,BrehmerGneitingETAL2021}, and results established in the present paper pave the way to novel classes of strictly proper (kernel) scoring rules on such objects.       

%
%

\bibliographystyle{plainnat}
\bibliography{biblio}

\begin{thebibliography}{52}
\providecommand{\natexlab}[1]{#1}
\providecommand{\url}[1]{\texttt{#1}}
\expandafter\ifx\csname urlstyle\endcsname\relax
  \providecommand{\doi}[1]{doi: #1}\else
  \providecommand{\doi}{doi: \begingroup \urlstyle{rm}\Url}\fi

\bibitem[Bachoc et~al.(2020)Bachoc, Suvorikova, Ginsbourger, Loubes, and
  Spokoiny]{BachocSuvorikovaETAL2020}
F.~Bachoc, A.~Suvorikova, D.~Ginsbourger, J.-M. Loubes, and V.~Spokoiny.
\newblock Gaussian processes with multidimensional inputs via optimal transport
  and hilbertian embedding.
\newblock \emph{Electronic Journal of Statistics}, 14:\penalty0 2742--2772,
  2020.

\bibitem[Baringhaus and Franz(2004)]{BaringhausFranz2004}
L.~Baringhaus and C.~Franz.
\newblock On a new multivariate two-sample test.
\newblock \emph{Journal of Multivariate Analysis}, 88:\penalty0 190--206, 2004.

\bibitem[Bauer(2001)]{Bauer2001}
H.~Bauer.
\newblock \emph{Measure and Integration Theory}.
\newblock De Gruyter, Berlin, 2001.

\bibitem[Baxendale(1976)]{Baxendale1976}
P.~Baxendale.
\newblock Gaussian measures on function spaces.
\newblock \emph{American Journal of Mathematics}, 98:\penalty0 891--952, 1976.

\bibitem[Benton et~al.(2019)Benton, Maddox, Salkey, Albinati, and
  Wilson]{BentonMaddoxETAL2019}
G.~W. Benton, W.~J. Maddox, J.~P. Salkey, J.~Albinati, and A.~G. Wilson.
\newblock Function-space distributions over kernels.
\newblock In \emph{Proceedings of the 33rd International Conference on Neural
  Information Processing Systems}, number 1340, 2019.

\bibitem[Berg et~al.(1984)Berg, Christensen, and
  Ressel]{BergChristensenETAL1984}
C.~Berg, J.~P.~R. Christensen, and P.~Ressel.
\newblock \emph{Harmonic Analysis on Semigroups}.
\newblock Springer, New York, 1984.

\bibitem[Berlinet and Thomas-Agnan(2004)]{BerlinetThomas-Agnan2004}
A.~Berlinet and C.~Thomas-Agnan.
\newblock \emph{Reproducing Kernel {H}ilbert Spaces in Probability and
  Statistics}.
\newblock Kluwer, London, 2004.

\bibitem[Betken and Dehling(2021)]{BetkenDehling2021}
A.~Betken and H.~Dehling.
\newblock Distance correlation for long-range dependent time series.
\newblock \emph{Preprint, \url{arXiv:2107.03041}}, 2021.

\bibitem[Bogachev(1998)]{Bogachev1998}
V.~I. Bogachev.
\newblock \emph{Gaussian Measures}.
\newblock American Mathematical Society, Providence, 1998.

\bibitem[Brehmer et~al.(2021)Brehmer, Gneiting, Schlather, and
  Strokorb]{BrehmerGneitingETAL2021}
J.~R. Brehmer, T.~Gneiting, M.~Schlather, and K.~Strokorb.
\newblock Using scoring functions to evaluate point process forecasts.
\newblock \emph{Preprint, \url{arXiv:2103.11884}}, 2021.

\bibitem[Buathong et~al.(2020)Buathong, Ginsbourger, and
  Krityakierne]{BuathongGinsbourgerETAL2020}
P.~Buathong, D.~Ginsbourger, and F.~Krityakierne.
\newblock Kernels over sets of finite sets using rkhs embeddings, with
  application to {B}ayesian (combinatorial) optimization.
\newblock In \emph{Twenty Third International Conference on Artificial
  Intelligence and Statistics (AISTATS)}, volume 108 of \emph{Proceedings of
  Machine Learning Research}, pages 2731--2741, 2020.

\bibitem[Christmann and Steinwart(2010)]{ChristmannSteinwart2010}
A.~Christmann and I.~Steinwart.
\newblock Universal kernels on non-standard input spaces.
\newblock In \emph{Advances in Neural Information Processing Systems},
  volume~23, 2010.

\bibitem[Dawid(2007)]{Dawid2007}
A.~P. Dawid.
\newblock The geometry of proper scoring rules.
\newblock \emph{Annals of the Institute of Statistical Mathematics},
  59:\penalty0 77--93, 2007.

\bibitem[Diestel and Uhl(1977)]{DiestelUhl1977}
J.~Diestel and J.~J. Uhl.
\newblock \emph{Vector measures}.
\newblock American Mathematical Society, Providence, 1977.

\bibitem[Gneiting and Raftery(2007)]{GneitingRaftery2007}
T.~Gneiting and A.~E. Raftery.
\newblock Strictly proper scoring rules, prediction, and estimation.
\newblock \emph{Journal of the American Statistical Association}, 102:\penalty0
  359--378, 2007.

\bibitem[Gretton et~al.(2005)Gretton, Bousquet, Smola, and
  Sch\"olkopf]{GrettonBousquetETAL2005}
A.~Gretton, O.~Bousquet, A.~Smola, and B.~Sch\"olkopf.
\newblock Measuring statistical dependence with hilbert--schmidt norms.
\newblock In S.~Jain, H.~U. Simon, and E.~Tomita, editors, \emph{Algorithmic
  Learning Theory}, volume 3734 of \emph{Lecture Notes in Computer Science},
  pages 63--77. Springer, Berlin, 2005.

\bibitem[Gretton et~al.(2012)Gretton, Borgwardt, Rasch, Sch\"olkopf, and
  Smola]{GrettonBorgwardtETAL2012}
A.~Gretton, K.~M. Borgwardt, M.~J. Rasch, B.~Sch\"olkopf, and A.~Smola.
\newblock A kernel two-sample test.
\newblock \emph{Journal of Machine Learning Research}, 13:\penalty0 723--773,
  2012.

\bibitem[Hamid et~al.(2021)Hamid, Schulze, Osborne, and
  Roberts]{HamidSchulzeETAL2021}
S.~Hamid, S.~Schulze, M.~A. Osborne, and S.~J. Roberts.
\newblock Marginalising over stationary kernels with {B}ayesian quadrature.
\newblock \emph{Preprint, \url{arXiv:2106.07452}}, 2021.

\bibitem[Hayati et~al.(2020)Hayati, Fukumizu, and
  Parvardeh]{HayatiFukumizuETAL2020}
S.~Hayati, K.~Fukumizu, and A.~Parvardeh.
\newblock Kernel mean embeddings of probability measures and its applications
  to functional data analysis.
\newblock \emph{Preprint, \url{arXiv:2011.02315}}, 2020.

\bibitem[Heinrich et~al.(2019)Heinrich, Schneider, Guttorp, and
  Thorarinsdottir]{HeinrichSchneiderETAL2019}
C.~Heinrich, M.~Schneider, P.~Guttorp, and T.~Thorarinsdottir.
\newblock Validation of point process forecasts.
\newblock \emph{Preprint,
  \url{https://www.nr.no/directdownload/1564572954/PointProcessValidation-Heinrich.pdf}},
  2019.

\bibitem[Hofmann et~al.(2008)Hofmann, Sch\"olkopf, and
  Smola]{HofmannScholkopfETAL2008}
T.~Hofmann, B.~Sch\"olkopf, and A.~J. Smola.
\newblock Kernel methods in machine learning.
\newblock \emph{Annals of Statistics}, 36:\penalty0 1171--1220, 2008.

\bibitem[{Kom Samo} and Roberts(2015)]{Kom-SamoRoberts2015}
Y.-L. {Kom Samo} and S.~Roberts.
\newblock Generalized spectral kernels.
\newblock \emph{Preprint, \url{arXiv:1506.02236}}, 2015.

\bibitem[Linde(1986)]{Linde1986b}
W.~Linde.
\newblock Uniqueness theorems for measures in $l_r$ and $c_0(\omega)$.
\newblock \emph{Mathematische Annalen}, 274:\penalty0 617--626, 1986.

\bibitem[Lyons(2013)]{Lyons2013}
R.~Lyons.
\newblock Distance covariance in metric spaces.
\newblock \emph{Annals of Probability}, 41:\penalty0 3284--3305, 2013.

\bibitem[Matheson and Winkler(1976)]{MathesonWinkler1976}
J.~E. Matheson and R.~L. Winkler.
\newblock Scoring rules for continuous probability distributions.
\newblock \emph{Management Science}, 22:\penalty0 1087--1096, 1976.

\bibitem[Matsui et~al.(2017)Matsui, Mikosch, and
  Samorodnitsky]{MatsuiMikoschETAL2017}
M.~Matsui, T.~Mikosch, and G.~Samorodnitsky.
\newblock Distance covariance for stochastic processes.
\newblock \emph{Probability and Mathematical Stastistics}, 37:\penalty0
  355--372, 2017.

\bibitem[M\'erigot et~al.(2020)M\'erigot, Delalande, and
  Chazal]{MerigotDelalandeETAL2020}
Q.~M\'erigot, A.~Delalande, and F.~Chazal.
\newblock Quantitative stability of optimal transport maps and linearization of
  the $2$-wasserstein space.
\newblock In \emph{Proceedings of the Twenty Third International Conference on
  Artificial Intelligence and Statistics}, volume 108 of \emph{Proceedings of
  Machine Learning Research}, pages 3186--3196, 2020.

\bibitem[Muandet et~al.(2012)Muandet, Fukumizu, Dinuzzo, and
  Sch\"olkopf]{MuandetFukumizuETAL2012}
K.~Muandet, K.~Fukumizu, F.~Dinuzzo, and B.~Sch\"olkopf.
\newblock Learning from distributions via support vector machines.
\newblock In \emph{Advances in Neural Information Processing Systems}, pages
  10--18, 2012.

\bibitem[Pan et~al.(2018)Pan, Tian, Wang, and Zhang]{PanTianETAL2018}
W.~Pan, Y.~Tian, X.~Wang, and H.~Zhang.
\newblock Ball divergence: Nonparametric two sample test.
\newblock \emph{Annals of Statistics}, 46:\penalty0 1109--1137, 2018.

\bibitem[Panaretos and Zemel(2020)]{PanaretosZemel2020}
V.~Panaretos and Y.~Zemel.
\newblock \emph{An invitation to statistics in Wasserstein space}.
\newblock Springer, New York, 2020.

\bibitem[Purves(1966)]{Purves1966}
R.~Purves.
\newblock Bimeasurable functions.
\newblock \emph{Fundamenta Mathematicae}, 58:\penalty0 149--157, 1966.

\bibitem[Rudin(1970)]{Rudin1970}
W.~Rudin.
\newblock \emph{Real and Complex Analysis}.
\newblock McGraw-Hill, London, 1970.

\bibitem[Schoenberg(1938)]{Schoenberg1938}
I.~J. Schoenberg.
\newblock Metric spaces and completely monotone functions.
\newblock \emph{Annals of Mathematics}, 39:\penalty0 811--841, 1938.

\bibitem[Sejdinovic et~al.(2013)Sejdinovic, Sriperumbudur, Gretton, and
  Fukumizu]{SejdinovicSriperumbudurETAL2013}
D.~Sejdinovic, B.~Sriperumbudur, A.~Gretton, and K.~Fukumizu.
\newblock Equivalence of distance-based and {RKHS}-based statistics in
  hypothesis testing.
\newblock \emph{Annals of Statistics}, 41:\penalty0 2263--2291, 2013.

\bibitem[Simon-Gabriel and Sch\"olkopf(2018)]{Simon-GabrielScholkopf2018}
C.-J. Simon-Gabriel and B.~Sch\"olkopf.
\newblock Kernel distribution embeddings: Universal kernels, characteristic
  kernels and kernel metrics on distributions.
\newblock \emph{Journal of Machine Learning Research}, 19:\penalty0 1--29,
  2018.

\bibitem[Simon-Gabriel et~al.(2020)Simon-Gabriel, Barp, Sch\"{o}lkopf, and
  Mackey]{Simon-GabrielBarpETAL2020}
C.-J. Simon-Gabriel, A.~Barp, B.~Sch\"{o}lkopf, and L.~Mackey.
\newblock Metrizing weak convergence with maximum mean discrepancies.
\newblock \emph{Preprint, \url{arXiv:2006.09268}}, 2020.

\bibitem[Sriperumbudur et~al.(2010)Sriperumbudur, Gretton, Fukumizu,
  Sch\"olkopf, and Lanckriet]{SriperumbudurGrettonETAL2010}
B.~K. Sriperumbudur, A.~Gretton, K.~Fukumizu, B.~Sch\"olkopf, and G.~R.~G.
  Lanckriet.
\newblock Hilbert space embeddings and metrics on probability measures.
\newblock \emph{Journal of Machine Learning Research}, 11:\penalty0 1517--1561,
  2010.

\bibitem[Stein(1999)]{Stein1999}
M.~L. Stein.
\newblock \emph{Interpolation of spatial data, some theory for kriging}.
\newblock Springer, New York, 1999.

\bibitem[Steinwart(2001)]{Steinwart2001}
I.~Steinwart.
\newblock On the influence of the kernel on the consistency of support vector
  machines.
\newblock \emph{Journal of Machine Learning Research}, 2:\penalty0 67--93,
  2001.

\bibitem[Steinwart and Christmann(2008)]{SteinwartChristmann2008}
I.~Steinwart and A.~Christmann.
\newblock \emph{Support Vector Machines}.
\newblock Springer, New York, 2008.

\bibitem[Steinwart and Scovel(2012)]{SteinwartScovel2012}
I.~Steinwart and C.~Scovel.
\newblock Mercer's theorem on general domains: {O}n the interaction between
  measures, kernels, and {RKHS}s.
\newblock \emph{Constructive Approximation}, 35:\penalty0 363--417, 2012.

\bibitem[Steinwart and Ziegel(2021)]{SteinwartZiegel2021}
I.~Steinwart and J.~F. Ziegel.
\newblock Strictly proper kernel scores and characteristic kernels on compact
  spaces.
\newblock \emph{Applied and Computational Harmonic Analysis}, 51:\penalty0
  510--542, 2021.

\bibitem[Stewart(1976)]{Stewart1976}
J.~Stewart.
\newblock Positive definite functions and generalizations, an historical
  survey.
\newblock \emph{Rocky Mountain Journal of Mathematics}, 6:\penalty0 409--433,
  1976.

\bibitem[Sutherland(2016)]{Sutherland2016}
D.~Sutherland.
\newblock \emph{Scalable, Flexible and Active Learning on Distributions}.
\newblock PhD thesis, School of Computer Science, Carnegie Mellon University,
  2016.

\bibitem[Szab\'o et~al.(2016)Szab\'o, Sriperumbudur, P\'oczos, and
  Gretton]{SzaboSriperumbudurETAL2016}
Z.~Szab\'o, B.~K. Sriperumbudur, B.~P\'oczos, and A.~Gretton.
\newblock Learning theory for distribution regression.
\newblock \emph{Journal of Machine Learning Research}, 17:\penalty0 1--40,
  2016.

\bibitem[Sz\'ekely and Rizzo(2004)]{SzekelyRizzo2004}
G.~J. Sz\'ekely and M.~Rizzo.
\newblock Testing for equal distribution in high dimension.
\newblock \emph{InterStat}, 5, November 2004.

\bibitem[Sz\'ekely and Rizzo(2005)]{SzekelyRizzo2005}
G.~J. Sz\'ekely and M.~L. Rizzo.
\newblock A new test for multivariate normality.
\newblock \emph{Journal of Multivariate Analysis}, 93:\penalty0 58--80, 2005.

\bibitem[Sz\'ekely and Rizzo(2009)]{SzekelyRizzo2009}
G.~J. Sz\'ekely and M.~L. Rizzo.
\newblock Brownian distance covariance.
\newblock \emph{Annals of Applied Statistics}, 3:\penalty0 1236--1265, 2009.

\bibitem[Sz\'ekely et~al.(2007)Sz\'ekely, Rizzo, and
  Bakirov]{SzekelyRizzoETAL2007}
G.~J. Sz\'ekely, M.~L. Rizzo, and N.~K. Bakirov.
\newblock Measuring and testing dependence by correlation of distances.
\newblock \emph{Annals of Statistics}, 35:\penalty0 2769--2794, 2007.

\bibitem[Wendland(2005)]{Wendland2005}
H.~Wendland.
\newblock \emph{Scattered data approximation}.
\newblock Cambridge University Press, Cambridge, 2005.

\bibitem[Werner(2002)]{Werner2002}
D.~Werner.
\newblock \emph{Funktionalanalysis}.
\newblock Springer, Berlin, 3rd edition, 2002.

\bibitem[Wynne and Duncan(2022)]{WynneDuncan2020}
G.~Wynne and A.~B. Duncan.
\newblock A kernel two-sample test for functional data.
\newblock \emph{Journal of Machine Learning Research}, 23:\penalty0 1--51,
  2022.

\end{thebibliography}

\appendix

\end{document}